\def\eqref#1{equation~\ref{#1}}
\def\1{\bm{1}}
\DeclareMathAlphabet{\mathsfit}{\encodingdefault}{\sfdefault}{m}{sl}
\SetMathAlphabet{\mathsfit}{bold}{\encodingdefault}{\sfdefault}{bx}{n}
\newcommand{\R}{\mathbb{R}}
\renewcommand{\eqref}[1]{(\ref{#1})}  
\theoremstyle{plain}
\newtheorem{theorem}{Theorem}[section]
\newtheorem{lemma}[theorem]{Lemma}
\theoremstyle{definition}
\theoremstyle{remark}
\newenvironment{smallpmatrix}
  {\left(\begin{smallmatrix}}
  {\end{smallmatrix}\right)}
\renewcommand{\paragraph}[1]{\textbf{#1}~~} 
\newcommand{\cmark}{\ding{51}}%
\newcommand{\xmark}{\ding{55}}%
\newcommand{\J}[0]{\mathbf{J}}      
\newcommand{\M}[0]{\mathbf{M}}      
\newcommand{\U}[0]{\mathbf{U}}      
\newcommand{\mat}[1]{\mathbf{#1}}   
\renewcommand{\vec}[1]{\mathbf{#1}} 
\newcommand{\btheta}{\boldsymbol{\theta}} 
\newcommand{\bmu}{\boldsymbol{\mu}} 
\newcommand{\bSigma}{\boldsymbol{\Sigma}} 
\newcommand{\x}[0]{\vec{x}}         
\newcommand{\y}[0]{\vec{y}}         
\newcommand{\N}[0]{\mathcal{N}}     
\newcommand{\T}[0]{^{\top}}        
\newcommand\norm[1]{\left\lVert#1\right\rVert}
\newcommand{\MAP}[0]{\textnormal{\textsc{map}}}
\newcommand{\trace}[0]{\mathrm{Tr}}
\newcommand{\first}[1]{\textbf{#1}}
\definecolor{myblue}{RGB}{218, 232, 252}
\begin{document}

\twocolumn[
\aistatstitle{Bayes without Underfitting: Fully Correlated Deep Learning Posteriors via Alternating Projections}

\aistatsauthor{ Marco Miani$^{\dagger}$ \And Hrittik Roy$^{\dagger}$ \And Søren Hauberg }

\aistatsaddress{ Technical University of Denmark \\ \texttt{mmia@dtu.dk} \And  Technical University of Denmark \\ \texttt{hroy@dtu.dk} \And Technical University of Denmark \\ \texttt{sohau@dtu.dk}} ]


\begin{abstract}
  Bayesian deep learning all too often underfits so that the Bayesian prediction is less accurate than a simple point estimate. Uncertainty quantification then comes at the cost of accuracy. For linearized models, the null space of the generalized Gauss-Newton matrix corresponds to parameters that preserve the training predictions of the point estimate. We propose to build Bayesian approximations in this null space, thereby guaranteeing that the Bayesian predictive does not underfit. We suggest a matrix-free algorithm for projecting onto this null space, which scales linearly with the number of parameters and quadratically with the number of output dimensions. We further propose an approximation that only scales linearly with parameters to make the method applicable to generative models. An extensive empirical evaluation shows that the approach scales to large models, including vision transformers with 28 million parameters. Code is available at: {\texttt{https://github.com/h-roy/projected-bayes}}
\end{abstract}


\section{Underfitting in Bayesian deep learning}
Bayesian deep learning tends to underfit. Numerous studies demonstrate that marginalizing approximate weight posteriors yields less accurate predictions than applying a \emph{maximum a posteriori} (\textsc{map}) point estimate \citep{wenzel2020good, daxberger2021laplace, zhang2024the, kristiadi2022being}. This significantly reduces the practical value of Bayesian deep learning, despite having attractive theoretical properties \citep{germain2016pac}.
To counteract this trend, we first explicate \emph{why} underfitting happens with Gaussian approximate posteriors, and thereafter propose a solution for low-noise data (e.g.\@ images).

\begin{figure}
  \includegraphics[width=\linewidth]{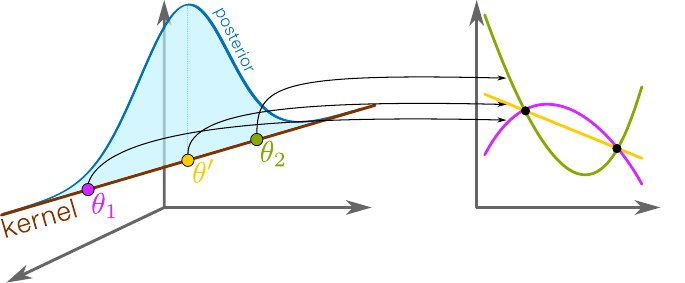}
  \vspace{-7mm}
  \caption{\emph{Key idea:} In overparametrized linear models, the \emph{kernel} (null space) contains all models that have identical predictions on the training data. We propose restricting approximate posteriors of deep neural networks to this kernel to avoid underfitting.}
  \label{fig:idea}
\end{figure}
\paragraph{What is underfitting?}
Deep learning performs very well when the training data is subject to limited observation noise. In contrast, Bayesian deep learning often \emph{underfits} in the sense that the Bayesian prediction deviates significantly from a \emph{point estimate} prediction \emph{on the training data} $\mathcal{D}$, i.e.
\begin{align}
  \mathbb{E}_{\btheta \sim q} \left[f(\btheta, \x)\right] \neq f(\btheta_{\textsc{map}}, \x)
  \qquad
  \text{for } \x \in \mathcal{D},
\end{align}
where $q$ is an approximate posterior and $\btheta_{\textsc{map}}$ is a point estimate of the neural network $f$.
This notion of underfitting only applies to low-noise data, where the posterior should have little uncertainty on the training data, but this is the most prominent scenario in deep learning. \looseness=-1 

\textbf{Why do Bayesian neural networks underfit?}
Deep neural networks are commonly \emph{overparametrized}, i.e.\@ the model has more parameters than observations, as this tends to improve both optimization and generalization \citep{allen2019learning}. Fig.~\ref{fig:idea} sketches the situation which we later formalize: Consider fitting a quadratic function with three parameters to only two observations. This leaves one degree of freedom undetermined and a linear subspace of the model parameters exists wherein all parameters yield identical predictions on the training data. For low-noise data, the true posterior will concentrate around this subspace. Unfortunately, common Gaussian approximations to the posterior are axis-aligned (i.e.\@ \emph{mean field} approximations \citep{bishop2006pattern}) and not aligned with the mentioned subspace. This mismatch leads to a poor posterior approximation that tends to underfit. \emph{We suggest, quite simply, to project the approximate posterior to the subspace in which underfitting cannot happen.}

\paragraph{Why is this beneficial?}
Our proposed posterior reflects the degrees of freedom in the model that fundamentally cannot be determined by even noise-free data. Predicting according to this distribution gives reliable out-of-distribution detection and general uncertainty quantification without underfitting. 
Our approach captures correlations between layers and retains high-accuracy predictions while scaling linearly with model size. 
Table~\ref{tab:models} contrasts our method with existing ones.\looseness=-1

\paragraph{Why is this difficult?}
We will soon see that the relevant subspace on which to project is given by the \emph{kernel} (i.e.\@ \emph{null space}) of a matrix that is quadratic in the number of model parameters. Even for models of modest size, this matrix is too large to be stored in memory and direct projection methods cannot be applied. \emph{We propose a linear-time sampling algorithm that can be implemented entirely using automatic differentiation.}


\begin{table*}[t]
	\vspace*{-1.0\baselineskip}
	\centering
        \begin{center}
	\caption{Comparisons between Laplace approximations. $N$: number of data points, $O$: output dimensions, $P$: number of parameters, $P_{l,in}$, $P_{l,out}$: input and output dimensions of layer $l$, $P_{ll}$: last layer parameters.} 
 \vspace{-3mm}
    \label{tab:models}
	\resizebox{1.0\textwidth}{!}{%
		\begin{tabular}{cccccccc}
			\toprule
			\textsc{Method} & \textsc{Correlation} & \textsc{Space}      & \textsc{per-sample Time}       & \textsc{preprocessing Time}      & \textsc{Error} & \textsc{Model}    & \textsc{Tractable} \\
                            & \textsc{structure}   & \textsc{complexity} & \textsc{complexity} & \textsc{complexity} & \textsc{bound} & \textsc{agnostic} & optimal \textsc{lml} \\
			\midrule
			Diagonal & \includegraphics[width=0.03\linewidth]{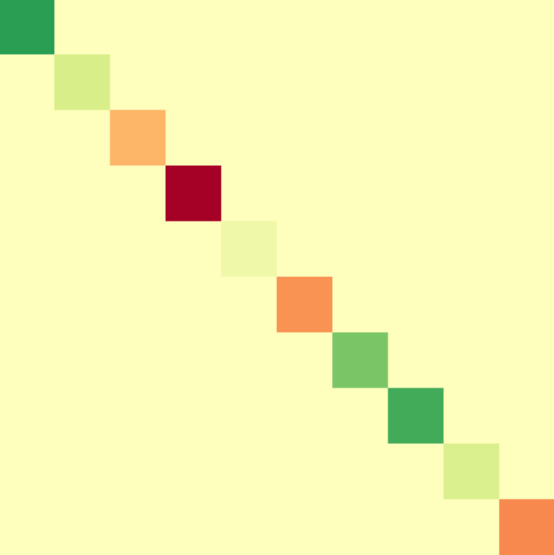} & $P$ & P & $PNO$ & \xmark & \cmark & \xmark \\
			Kronecker-Factored & \includegraphics[width=0.03\linewidth]{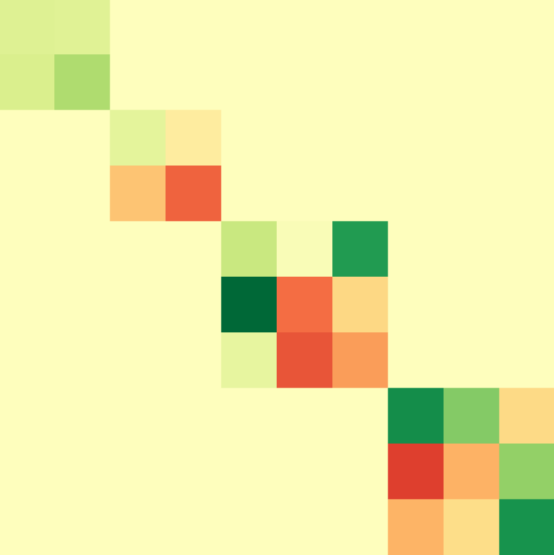} & $\sum_{l=1}^L P_{l,in}^2 + P_{l,out}^2$  & $\sum_{l=1}^L P_{l,in}^2 \!+\! P_{l,out}^2$ & $\sum_{l=1}^L \! NP_{l,in} \!+\! NP_{l,out} \!+\! P_{l,in}^3 \!+\! P_{l,out}^3$ 
   & \xmark & \xmark & \xmark\\
   			Last-Layer & \includegraphics[width=0.03\linewidth]{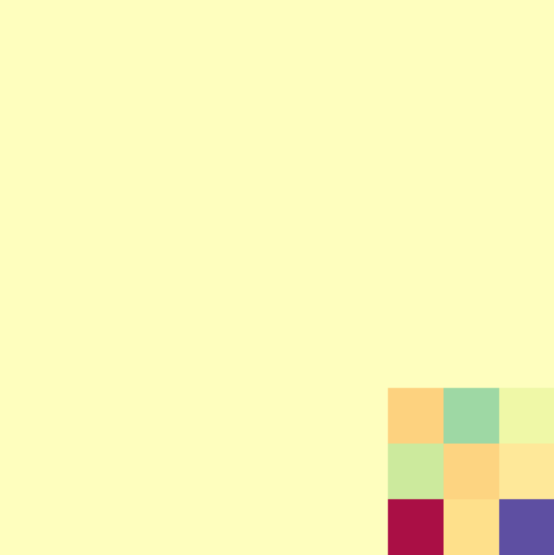} & $P_{ll}^2$ & $P_{ll}^2$ & $O^2N P_{ll}^2 + P_{ll}^3$ & \xmark & \cmark & \xmark\\
			\textbf{This paper} 
            & \includegraphics[width=0.03\linewidth]{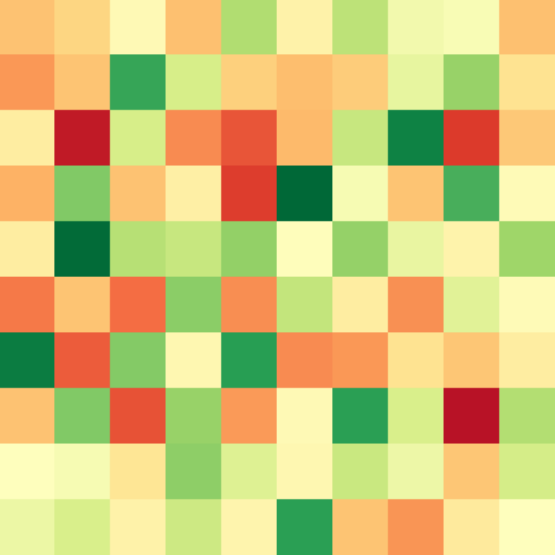} & $P+NO^2$ or $P+N$ & $t_{\max}PN+NO^3$ or $t_{\max}PN$ & 0 & \cmark~(Lemmas~\ref{lm:error_bound} \& \ref{lm:loss_proj_err})  & \cmark & \cmark~(Sec.~\ref{sec:3.3}) \\
			\bottomrule
	\end{tabular}}
        \end{center}	
	\vspace*{-0.5\baselineskip}
\end{table*}

\paragraph{Paper outline.}
Sec.~\ref{sec:background} gives the background to derive our approach. A wider discussion of related work is postponed to Sec.~\ref{sec:related_work}. We develop our approach in two steps. First, Sec.~\ref{sec:model} describes our proposed posterior approximation, while Sec.~\ref{sec:algorithm} derives an efficient sampling algorithm. Empirical investigations are conducted in Sec.~\ref{sec:results}.\looseness=-1

\section{Background and notation}\label{sec:background}

\paragraph{Notation.}
Let $f:\R^P \times \R^I \rightarrow \R^O$ denote a neural network with parameter $\btheta \in \R^P$ that maps inputs $\x \in \R^I$ to outputs $\y \in \R^O$.
We define the per-datum Jacobian as $\J_{\btheta}(\x) = \partial_{\btheta} f(\btheta, \x) \in \R^{O \times P}$ and its stacked counterpart $\J_{\btheta} = [\J_{\btheta}(\x_1); \ldots; \J_{\btheta}(\x_N)] \in \R^{NO \times P}$ for a fixed training dataset $\mathcal{D} = \{\x_1,\ldots,\x_N\}$.

\textbf{Gaussian approximate posteriors}, i.e.\@ $p(\btheta | \mathcal{D}) \approx q(\btheta | \mathcal{D}) = \N(\btheta | \bmu, \bSigma)$, are ever-present in Bayesian deep learning \citep{blundell2015weight, botev2017practical, sharma2021sketching, khan2018fast, maddox2019simple, stephan2017stochastic, osawa2019practical, antoran2022sampling}. For modern neural networks, the parameter dimension $P$ can easily be several billions, such that even storing the covariance $\bSigma$ in memory is infeasible. To allow for linear scaling, common Gaussian approximate posteriors are realized by disregarding most correlations in $\bSigma$, e.g.\@ with diagonal or block-diagonal approximations.

Unfortunately, there is significant theoretical evidence against disregarding correlations. \citet{foong2019between} shows that diagonal approximations underestimate `in-between' uncertainty, while \citet{reparam:2024} shows that uncorrelated models cannot generally be infinitesimally invariant to reparametrizations even if this property is held by the true posterior.\looseness=-1

\textbf{The \emph{linearized Laplace approximation}} (\emph{\textsc{lla}};  \citet{immer2021improving, khan2019approximate}) use a first-order Taylor expansion of $f(\btheta, \x) \approx f_{\text{lin}}^{\btheta_{\textsc{map}}}(\btheta, \x) = f(\btheta_{\textsc{map}}, \x) + \J_{\btheta_{\textsc{map}}}(\x)(\btheta - \btheta_{\textsc{map}})$, where $\btheta_{\textsc{map}}$ is the expansion point (usually a \textsc{map} estimate of the parameter). Secondly, \textsc{lla} perform a standard Laplace approximation \citep{mackay1992laplace} of the linearized model, yielding
\begin{align}
\label{eq:laplace_posterior_original}
  q_{\textsc{lla}}(\btheta | \mathcal{D}) &= \N\left(\btheta ~\big|~ \btheta_{\textsc{map}},\, \left(\alpha\mathbb{I} + \textsc{ggn}_{\btheta_{\textsc{map}}} \right)^{-1} \right) \\
  \textsc{ggn}_{\btheta_{\textsc{map}}} &= \J_{\btheta_{\textsc{map}}}\T \mat{H}_{\btheta_{\textsc{map}}} \J_{\btheta_{\textsc{map}}} \in \R^{P \times P}.
\end{align}
Here $\textsc{ggn}_{\btheta_{\textsc{map}}}$ is the so-called \emph{generalized Gauss-Newton matrix}, $\alpha$ is the prior precision, $\mat{H}_{\btheta}(\x) = -\partial^2_{f(\btheta, \x)} \log p(\y | f(\btheta, \x)) \in \R^{O \times O}$ and $\mat{H}_{\btheta} \in \R^{NO\times NO}$ is its stacked counterpart. This Hessian takes a simple closed-form for common likelihoods, e.g.\@ it is the identity for Gaussian likelihoods \citep{immer2021improving}. For notational simplicity, we treat this Hessian as an identity for the remainder of this paper.

The \textsc{lla} has a strong empirical performance \citep{immer2021improving}, but it is computationally taxing as the \textsc{ggn} has size $P \times P$. This is infeasible to instantiate even for models of modest size, and the $\mathcal{O}(P^3)$ cost associated with sampling presents a significant computational challenge. Iterative matrix-free solvers can potentially alleviate these concerns \citep{reparam:2024}, but the \textsc{ggn} is highly ill-conditioned \citep{papyan2020traces,miani:sketching:2024} and this approach requires overcoming issues of numerical stability. Practical \textsc{lla} implementations therefore resort to sparse, e.g.\@ diagonal or block-diagonal, approximations of the \textsc{ggn}.

\section{The proposed approximate posterior}\label{sec:model}
We next propose a fully correlated Gaussian posterior that is guaranteed to not underfit. Unless otherwise stated, the presented results are novel contributions and proofs of all theorems can be found in the appendix.


As alluded to, we propose restricting the posterior covariance to a particular subspace of the parameter space. Letting $\U \in \R^{P \times R}$ denote an orthogonal basis of this subspace, we consider an \emph{isotropic} model therein,
\begin{equation}
    q_{\textnormal{\textsc{proj}}}(\btheta | \btheta_{\textsc{map}}, \mathcal{D})
    =
    \mathcal{N}\left(\btheta_{\textsc{map}}, \alpha^{-1}\U\U\T \right).
    \label{eq:approx_post}
\end{equation} 
Specifically, we choose the above subspace as the \emph{kernel} (i.e.\@ \emph{null space}) of the \textsc{ggn} matrix and call the result the \emph{projected posterior}. Formally,
\begin{align}
  \U\U\T = \mathbb{I}_P - \mathcal{P}(\textsc{ggn}) = \mathbb{I}_P - \mat{V} \ [\mat{D} \!>\! 0] \ \mat{V}\T,
\end{align}
where $\mat{V}\mat{D}\mat{V}\T$ is an eigen decomposition of the \textsc{ggn} matrix and $[\mat{D} \!>\! 0] \in \R^{P \times P}$ is diagonal with elements 1 where $\mat{D}$ holds positive values. We use the shorthand notation $\mathcal{P}(\textsc{ggn})$ to denote the projection onto the image of the \textsc{ggn}.
\emph{We next justify the projected posterior through the lens of underfitting.}

%
\paragraph{The projected posterior never underfits.}
%
%
When using the linearized neural network, $f_{\text{lin}}^{\btheta_{\textsc{map}}}(\btheta, \x) = f(\btheta_{\textsc{map}}, \x) + \J_{\btheta_{\textsc{map}}}(\x)(\btheta - \btheta_{\textsc{map}})$, we avoid underfitting when $\J_{\btheta_{\textsc{map}}}(\x)(\btheta - \btheta_{\textsc{map}}) = \vec{0}$ on the training data. The linearized model then avoids underfitting when $\btheta - \btheta_{\textsc{map}}$ is restricted to the kernel (null space) of $\J_{\btheta_{\textsc{map}}} = [\J_{\btheta_{\textsc{map}}}(\x_1); \ldots; \J_{\btheta_{\textsc{map}}}(\x_N)]$. For the proposed projected posterior \eqref{eq:approx_post}, we, thus, choose $\U$ as an orthonormal basis of the Jacobian kernel and note that this kernel coincides with the zero eigenvectors of the \textsc{ggn}. \looseness=-1

These considerations can be formalized as follows.
\begin{lemma}
\label{lemma:3.1}
    The projected posterior \eqref{eq:approx_post} is supported on equal functions on the training data, i.e.\@ $\forall \x \in \mathcal{D}$
    \begin{align}
        &f_{\textnormal{lin}}^{\btheta_{\MAP}}(\btheta, \x) = f(\btheta_{\MAP}, \x)
        &\forall \btheta \sim q_{\textnormal{\textsc{proj}}},
    \end{align}
    which implies that~~$\textnormal{\textsc{Var}}_{\btheta \sim q_{\textnormal{\textsc{proj}}}} f_{\textnormal{lin}}^{\btheta_{\MAP}}(\btheta, \x) = 0$.
\end{lemma}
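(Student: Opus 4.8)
The plan is to reduce the functional statement to one linear-algebra fact: every realization of $q_{\textnormal{\textsc{proj}}}$ differs from $\btheta_{\MAP}$ by a vector in $\ker(\J_{\btheta_{\MAP}})$, on which the Jacobian vanishes by construction. First I would make the degenerate covariance explicit via a reparametrization. Since $\U \in \R^{P\times R}$ has orthonormal columns ($\U\T\U = \mathbb{I}_R$), drawing $\vz \sim \N(\vec{0}, \mathbb{I}_R)$ and setting $\btheta = \btheta_{\MAP} + \alpha^{-1/2}\U\vz$ produces exactly $\btheta \sim \N(\btheta_{\MAP}, \alpha^{-1}\U\U\T) = q_{\textnormal{\textsc{proj}}}$. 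This pins down the support of $q_{\textnormal{\textsc{proj}}}$ as the affine subspace $\btheta_{\MAP} + \mathrm{col}(\U)$, so that $\btheta - \btheta_{\MAP} = \alpha^{-1/2}\U\vz \in \mathrm{col}(\U)$ for every sample.

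Next I would identify $\mathrm{col}(\U)$ with the Jacobian kernel. By construction $\U\U\T = \mathbb{I}_P - \mathcal{P}(\textsc{ggn})$ is the orthogonal projection onto the zero eigenspace of $\textsc{ggn}_{\btheta_{\MAP}} = \J_{\btheta_{\MAP}}\T\J_{\btheta_{\MAP}}$ (taking $\mat{H}=\mathbb{I}$). The elementary equivalence $\J\T\J\vx = \vec{0} \Leftrightarrow \norm{\J\vx}^2 = 0 \Leftrightarrow \J\vx = \vec{0}$ shows this eigenspace is precisely $\ker(\J_{\btheta_{\MAP}})$, hence $\mathrm{col}(\U) = \ker(\J_{\btheta_{\MAP}})$ and $\J_{\btheta_{\MAP}}(\btheta - \btheta_{\MAP}) = \vec{0}$ for every $\btheta$ in the support. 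Because $\J_{\btheta_{\MAP}}$ simply stacks the per-datum Jacobians $\J_{\btheta_{\MAP}}(\x_i)$ as row blocks, vanishing of the stacked product forces each block to vanish, giving $\J_{\btheta_{\MAP}}(\x)(\btheta - \btheta_{\MAP}) = \vec{0}$ for all $\x \in \mathcal{D}$.

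Substituting into the linearization would then give $f_{\textnormal{lin}}^{\btheta_{\MAP}}(\btheta, \x) = f(\btheta_{\MAP}, \x) + \J_{\btheta_{\MAP}}(\x)(\btheta - \btheta_{\MAP}) = f(\btheta_{\MAP}, \x)$ for every $\x \in \mathcal{D}$ and every $\btheta$ in the support, which is the first claim. The variance statement then follows immediately: as a function of the random $\btheta$, the quantity $f_{\textnormal{lin}}^{\btheta_{\MAP}}(\btheta, \x)$ is almost surely equal to the constant $f(\btheta_{\MAP}, \x)$, so its variance is zero.

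The main subtlety to handle carefully is the degenerate covariance: rather than reasoning loosely about a rank-deficient $\alpha^{-1}\U\U\T$, the reparametrization $\btheta = \btheta_{\MAP} + \alpha^{-1/2}\U\vz$ makes the support exact and keeps the argument clean. The only other point worth stating explicitly is the stacked-to-per-datum Jacobian reduction; both are routine, and I do not expect a genuine obstacle beyond keeping the measure-zero/support language precise.
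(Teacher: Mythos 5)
Your proof is correct and follows the same route the paper itself takes (the paper justifies Lemma~\ref{lemma:3.1} inline, just before its statement): samples of $q_{\textnormal{\textsc{proj}}}$ lie in $\btheta_{\MAP} + \mathrm{col}(\U)$, the column space of $\U$ is exactly $\ker(\J_{\btheta_{\MAP}})$ because $\ker(\J\T\J) = \ker(\J)$, and hence the perturbation term $\J_{\btheta_{\MAP}}(\x)(\btheta - \btheta_{\MAP})$ vanishes on every training point, making $f_{\textnormal{lin}}^{\btheta_{\MAP}}(\btheta,\x)$ almost surely constant with zero variance. Your explicit reparametrization $\btheta = \btheta_{\MAP} + \alpha^{-1/2}\U\vz$ is a clean way to make the support argument rigorous, but it does not change the substance of the argument.
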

We emphasize that the statement \emph{only holds on the training data}. Elsewhere, the approximate posterior yields strictly positive variances with high probability under strong but reasonable assumptions.
\begin{restatable}{lemma}{PositiveVarianceLemma}
  \label{lm:variance_nonzero_nontrainset}
    Let \mbox{$\J_{\btheta} \!=\! [\J_{\btheta}(\x_1)\T \ldots \J_{\btheta}(\x_N)\T]\T$ and} $\x_{\textnormal{test}} \in \R^I$, then
    \begin{equation}
    \label{eq:variance_nonzero_nontrainset}
        \textnormal{\textsc{Var}}_{\btheta \sim q_{\textnormal{\textsc{proj}}}} f_{\textnormal{lin}}^{\btheta_{\MAP}}(\btheta, \x_{\textnormal{test}}) > 0
    \end{equation}
    if
    \resizebox{0.95\linewidth}{!}{$\textnormal{\textsc{Rank}}\!\left(\begin{smallpmatrix}
        \J_{\btheta_{\textsc{map}}} \\ \J_{\btheta_{\textsc{map}}}(\x_{\textnormal{test}})
    \end{smallpmatrix} \begin{smallpmatrix}
        \J_{\btheta_{\textsc{map}}} \\  \J_{\btheta_{\textsc{map}}}(\x_{\textnormal{test}})
    \end{smallpmatrix}\T\right)
    >
    \textnormal{\textsc{Rank}}\left(\J_{\btheta_{\textsc{map}}} \J_{\btheta_{\textsc{map}}}\T\right)$.} 
\end{restatable}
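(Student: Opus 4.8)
The plan is to compute the predictive covariance at $\x_{\textnormal{test}}$ in closed form and then recognize the stated rank inequality as exactly the condition for that covariance to be nonzero. Abbreviate the stacked training Jacobian as $\J \coloneqq \J_{\btheta_{\MAP}}$ and the test Jacobian as $\J_* \coloneqq \J_{\btheta_{\MAP}}(\x_{\textnormal{test}})$ (the expansion point $\btheta_{\MAP}$ is fixed throughout). Under $q_{\textnormal{\textsc{proj}}}$ we have $\btheta - \btheta_{\MAP} \sim \N(\vec{0}, \alpha^{-1}\U\U\T)$, and since the linearized predictor $f_{\textnormal{lin}}^{\btheta_{\MAP}}(\btheta, \x_{\textnormal{test}}) = f(\btheta_{\MAP}, \x_{\textnormal{test}}) + \J_*(\btheta - \btheta_{\MAP})$ is affine in $\btheta$, pushing the Gaussian through the linear map $\J_*$ gives
\[ \Cov_{\btheta \sim q_{\textnormal{\textsc{proj}}}}\! f_{\textnormal{lin}}^{\btheta_{\MAP}}(\btheta, \x_{\textnormal{test}}) = \alpha^{-1}\J_*\U\U\T\J_*\T = \alpha^{-1}(\J_*\U)(\J_*\U)\T . \]
This is a Gram matrix, so its trace is $\alpha^{-1}\|\J_*\U\|_F^2$; interpreting $\textnormal{\textsc{Var}}$ of the vector output as this trace (equivalently, as the covariance being nonzero), the claim reduces to showing $\J_*\U \neq \vec{0}$.

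Next I would reduce the hypothesis to precisely $\J_*\U \neq \vec{0}$. Applying the elementary identity $\textnormal{\textsc{Rank}}(A A\T) = \textnormal{\textsc{Rank}}(A)$ to both Gram matrices, the rank inequality becomes $\textnormal{\textsc{Rank}}\begin{smallpmatrix}\J \\ \J_*\end{smallpmatrix} > \textnormal{\textsc{Rank}}(\J)$, i.e.\@ appending the test rows strictly increases the rank. This happens if and only if some row of $\J_*$ lies outside the row space of $\J$. Since the row space of $\J$ is $(\ker \J)^\perp$ and the columns of $\U$ form an orthonormal basis of $\ker \J$, a row $\vec{r}$ of $\J_*$ lies in the row space of $\J$ exactly when $\vec{r}\T\U = \vec{0}$; hence all rows lie in it iff $\J_*\U = \vec{0}$. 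Therefore the rank strictly increases iff $\J_*\U \neq \vec{0}$, which by the first step is equivalent to positive variance. Chaining the two equivalences in fact yields an `if and only if', of which the lemma records the `if' direction.

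The main obstacle is this middle step: translating the given Gram-matrix rank inequality into the geometric statement $\J_*\U \neq \vec{0}$. The care needed is in moving cleanly between row spaces, their orthogonal complements, and the kernel basis $\U$, and in justifying that stacking rows increases the rank exactly when the new rows escape the existing row space. By contrast, the covariance computation is routine once one fixes the convention that $\textnormal{\textsc{Var}}$ of a vector output denotes the trace of its covariance, so that positivity is equivalent to the single algebraic condition $\J_*\U \neq \vec{0}$ and no component-wise positivity needs to be established.
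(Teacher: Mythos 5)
Your proof is correct, and it in fact establishes slightly more than the lemma states: a chain of equivalences showing that the predictive variance is positive \emph{if and only if} the rank strictly increases. Your route differs from the paper's in execution. The paper argues by contraposition: assuming zero variance, it introduces singular value decompositions of the training Jacobian $\J_{\btheta_{\MAP}}$ and the test Jacobian $\J_{\btheta_{\MAP}}(\x_{\textnormal{test}})$, expands the variance as $\sum_{i>D}\sum_j \gamma_j^2 (W_j\T V_i)^2$ over the kernel directions $V_i$, deduces that each right singular vector $W_j$ of the test Jacobian must lie in the span of the training right singular vectors $V_1,\dots,V_D$, and then explicitly computes that the combined matrix $\J_{\btheta_{\MAP}}\T\J_{\btheta_{\MAP}} + \J_{\btheta_{\MAP}}(\x_{\textnormal{test}})\T\J_{\btheta_{\MAP}}(\x_{\textnormal{test}})$ has rank exactly $D$, contradicting the strict rank hypothesis. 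You instead collapse the whole argument to the single algebraic condition $\J_{\btheta_{\MAP}}(\x_{\textnormal{test}})\,\U \neq \vec{0}$ via three clean reductions: the Gaussian pushforward giving covariance $\alpha^{-1}\bigl(\J_{\btheta_{\MAP}}(\x_{\textnormal{test}})\U\bigr)\bigl(\J_{\btheta_{\MAP}}(\x_{\textnormal{test}})\U\bigr)\T$, the identity $\textnormal{\textsc{Rank}}(\mat{A}\mat{A}\T)=\textnormal{\textsc{Rank}}(\mat{A})$ turning the Gram-matrix hypothesis into a statement about stacked rows, and the duality between the row space of $\J_{\btheta_{\MAP}}$ and $(\ker \J_{\btheta_{\MAP}})^{\perp}$ identifying row-space escape with non-annihilation by $\U$. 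Both proofs rest on the same underlying geometric fact---the variance is the squared norm of the test Jacobian's component inside $\ker \J_{\btheta_{\MAP}}$---but your packaging avoids SVD machinery entirely, is shorter, makes transparent that the technical rank condition is precisely the condition that some test row leaves the training row space, and yields the ``if and only if'' for free; the paper's more computational route, in exchange, explicitly exhibits the rank-$D$ structure of the combined Gram matrix when the variance vanishes.
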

Jointly the two lemmas show that out-of-distribution data is guaranteed to have higher predictive variance than the training data. The (technical) rank assumption in Lemma~\ref{lm:variance_nonzero_nontrainset} is known to be satisfied with high probability under reasonable assumptions \citep{bombari2022memorization, nguyen2021tight, karhadkar2024bounds}. \looseness=-1

These results can be contrasted with \textsc{lla}, where we can show the following result.
%
\begin{restatable}{theorem}{UpperBoundLinearizedLaplace}
\label{thm:upper_bound_lla}
    For $\alpha>0$, the predictive variance of \textsc{lla} on any training datapoint is positive and bounded, 
    \begin{equation*}
        \frac{O \gamma^2}{\gamma^2 + \alpha}
        \leq
        \textnormal{\textsc{Var}}_{\theta\sim q_{\textnormal{\textsc{lla}}}} f_{\textnormal{lin}}^{\btheta_{\MAP}}(\btheta, \x) 
        \leq 
        \frac{O \lambda^2}{\lambda^2 + \alpha} 
        \quad \text{for } \x \in \mathcal{D}.
    \end{equation*}
    Here, $\lambda$ and $\gamma$ are the largest and smallest singular values of the dataset Jacobian $\J_{\btheta_{\textsc{map}}}$, respectively.
\end{restatable}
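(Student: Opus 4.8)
The plan is to reduce the claim to a statement about a nonnegative weighted average of the scalar map $h(t) = t/(\alpha + t)$ evaluated at the squared singular values of $\J_{\btheta_{\MAP}}$, and then exploit monotonicity of $h$. First I would fix a training point $\x \in \mathcal{D}$ and, using $\mat{H} = \mathbb{I}$ so that $\textsc{ggn}_{\btheta_{\MAP}} = \J_{\btheta_{\MAP}}\T\J_{\btheta_{\MAP}}$, note that under $q_{\textnormal{\textsc{lla}}}$ the increment $\btheta - \btheta_{\MAP}$ is centered Gaussian with covariance $\bSigma = (\alpha\mathbb{I} + \J_{\btheta_{\MAP}}\T\J_{\btheta_{\MAP}})^{-1}$. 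Since $f_{\textnormal{lin}}^{\btheta_{\MAP}}(\btheta,\x)$ is affine in $\btheta$, its predictive covariance is $\J_{\btheta_{\MAP}}(\x)\,\bSigma\,\J_{\btheta_{\MAP}}(\x)\T$, so the quantity in the theorem is its trace, $\textnormal{\textsc{Var}} = \Tr\!\big(\J_{\btheta_{\MAP}}(\x)\,\bSigma\,\J_{\btheta_{\MAP}}(\x)\T\big)$.

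Next I would diagonalize. Writing the eigendecomposition $\J_{\btheta_{\MAP}}\T\J_{\btheta_{\MAP}} = \mat{V}\mat{D}\mat{V}\T$, the eigenvalues are exactly the squared singular values $\sigma_i^2$ of $\J_{\btheta_{\MAP}}$, and $\bSigma = \mat{V}(\alpha\mathbb{I}+\mat{D})^{-1}\mat{V}\T$. Substituting and letting $\mat{v}_i$ denote the $i$-th eigenvector gives $\textnormal{\textsc{Var}} = \sum_i \|\J_{\btheta_{\MAP}}(\x)\mat{v}_i\|^2/(\alpha+\sigma_i^2)$. The directions with $\sigma_i = 0$ lie in $\ker \J_{\btheta_{\MAP}}$, which by the block argument behind Lemma~\ref{lemma:3.1} is contained in $\ker \J_{\btheta_{\MAP}}(\x)$, so those terms vanish and only strictly positive $\sigma_i^2 \in [\gamma^2,\lambda^2]$ survive.

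The core of the argument is then to rewrite each surviving term as $c_i\,h(\sigma_i^2)$ with weight $c_i = \|\J_{\btheta_{\MAP}}(\x)\mat{v}_i\|^2/\sigma_i^2 \ge 0$, and to show that the weights sum to exactly $O$. I would compute $\sum_i c_i = \Tr\!\big(\J_{\btheta_{\MAP}}(\x)\,(\J_{\btheta_{\MAP}}\T\J_{\btheta_{\MAP}})^{+}\,\J_{\btheta_{\MAP}}(\x)\T\big)$, then use that for a training point the per-datum Jacobian is a row block $\J_{\btheta_{\MAP}}(\x)=\mat{E}_{\x}\J_{\btheta_{\MAP}}$ of the stacked Jacobian. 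Assuming $\J_{\btheta_{\MAP}}$ has full row rank $NO$ (equivalently $\gamma>0$, the regime in which the bound is non-trivial), the orthogonal projector onto its range satisfies $\J_{\btheta_{\MAP}}(\J_{\btheta_{\MAP}}\T\J_{\btheta_{\MAP}})^{+}\J_{\btheta_{\MAP}}\T = \mathbb{I}_{NO}$, so the trace collapses to $\Tr(\mat{E}_{\x}\mat{E}_{\x}\T)=O$.

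Finally, since $h$ is increasing and $\gamma^2 \le \sigma_i^2 \le \lambda^2$, the expression $\textnormal{\textsc{Var}} = \sum_i c_i\,h(\sigma_i^2)$ is a nonnegative combination of values of $h$ whose weights sum to $O$, hence sandwiched between $O\,h(\gamma^2)$ and $O\,h(\lambda^2)$, which is precisely the stated bound. I expect the main obstacle to be the weight-sum identity: recognizing the per-datum block structure $\J_{\btheta_{\MAP}}(\x)=\mat{E}_{\x}\J_{\btheta_{\MAP}}$ and invoking full row rank so that the pseudoinverse contracts to the identity projector and the total weight is exactly $O$ rather than merely bounded by it. The monotonicity-and-averaging step is then routine.
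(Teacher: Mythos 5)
Your proposal is correct and follows essentially the same route as the paper's proof: both reduce the predictive variance to a weighted combination of $\sigma_i^2/(\sigma_i^2+\alpha)$ over the spectrum of the dataset Jacobian, show the weights sum to exactly $O$, and conclude by monotonicity of $t \mapsto t/(t+\alpha)$. Your weight-sum identity $\J_{\btheta_{\MAP}}(\J_{\btheta_{\MAP}}\T\J_{\btheta_{\MAP}})^{+}\J_{\btheta_{\MAP}}\T = \mathbb{I}_{NO}$ under full row rank is the same fact the paper uses in the form $\sum_j w_j w_j\T = \mathbb{I}_{NO}$ for the left singular vectors (your explicit full-row-rank assumption is only implicit in the paper, so if anything your treatment of the rank-deficient edge case is slightly more careful).
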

Here, $\gamma$ is strictly positive with high probability under reasonable assumption \citep{bombari2022memorization, nguyen2021tight, karhadkar2024bounds}. \emph{\textsc{lla} then underfits with high probability, which contrasts our approach (c.f.\@ Lemma~\ref{lemma:3.1})}.
Note that this is only true for the (intractable) \textsc{ggn}-based covariance. We expect that sparse approximations to this covariance, e.g.\@ diagonal and \textsc{kfac}, has even higher training set variance.



\paragraph{Underfitting in existing models.}
The above analysis justifies the projected posterior and also sheds light on why current Bayesian approximations often underfit. For efficiency, mean field approximations of the posterior covariance are quite common, e.g.\@ \emph{diagonal} or \emph{Kronecker factored} covariances \citep{ritter2018scalable, martens2015optimizing}. These approximations will generally sample outside the \textsc{ggn} kernel, implying strictly positive predictive variances on the training data even for noise-free data. This reasoning also motivates recent works that have attempted to improve these approximations by making their spectrum more aligned with the spectrum of the \textsc{ggn} \citep{george2018fast, dhahri2024shaving}. 

 

\paragraph{Computational benefits.}
Beyond the above theoretical motivations, our projected covariance also brings computational benefits. Since $\U$ is an orthonormal basis, the covariance $\U\U\T$ is a projection matrix, implying that its eigenvalues are all 0 or 1. This, in turn, implies that $\U\U\T = (\U\U\T)^2 = (\U\U\T)\T = (\U\U\T)^{\dagger}$, where $^{\dagger}$ denotes pseudo-inversion. Drawing samples from $q$, thus, only requires matrix-vector products with $\U\U\T$ and the matrix need not be instantiated.
The only open question is then how to efficiently perform such matrix-vector products. We answer this in Sec.~\ref{sec:algorithm}.

Since all eigenvalues of $\U\U\T$ are 0 or 1, we can a priori expect the matrix to yield numerically stable computations even at moderate numerical precision. This is in contrast to the \textsc{ggn}, which easily has condition numbers that exceed $10^6$ \citep{miani:sketching:2024}.

\paragraph{Tractable model selection.}
\label{sec:3.3}
A benefit of Bayesian neural networks is that we can choose $\alpha$ by maximizing the marginal likelihood, $p(\mathcal{D}|\alpha)$, on the training data, i.e.\@ without relying on validation data \citep{mackay1995probable}. However, since the marginal likelihood of the true posterior is intractable, it is common to consider that of the Laplace approximation \citep{immer2021scalable} 
\begin{align}
\label{eq: lml}
\begin{split}
    \log \, q_{\textsc{lla}}(\mathcal{D}|\alpha) 
      &= 
      \log\,p(\mathcal{D}|\btheta_{\MAP}) + \log\, p(\btheta_{\MAP}|\alpha)  \\
      &- \frac{1}{2} \log\det\left( \frac{1}{2\pi}\left(\textsc{ggn} + \alpha \mathbb{I}\right) \right).
\end{split}
\end{align}
This can then be optimized numerically to estimate $\alpha$. In contrast, the projective posterior \eqref{eq:approx_post} does not require optimization as $\alpha$ is available in closed form.
\begin{lemma}
    The marginal likelihood for the projected posterior \eqref{eq:approx_post} has a globally optimal $\alpha$ given by
    \begin{equation}\label{eq:opt_alpha}
        \alpha^* = \frac{\norm{\btheta_{\MAP}}^2}{P - \trace(\mathbb{I}_P - \mathcal{P}\left(\textnormal{\textsc{ggn}}_{\btheta_{\MAP}}\right))}.
    \end{equation}
\end{lemma}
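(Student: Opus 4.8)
My plan is to write the marginal likelihood of the projected posterior explicitly as a one-dimensional function of $\alpha$ and then solve a scalar optimization in closed form. Starting from the Laplace-style evidence used for \textsc{lla} in \eqref{eq: lml}, I would adapt it to the degenerate covariance $\alpha^{-1}\U\U\T$ of \eqref{eq:approx_post}. The key simplification is Lemma~\ref{lemma:3.1}: because every $\btheta$ in the support of $q_{\textnormal{\textsc{proj}}}$ leaves the linearized predictions on $\mathcal{D}$ unchanged, the data-fit term $\log p(\mathcal{D}\mid\btheta)$ is constant along the kernel directions spanned by $\U$. Hence the only $\alpha$-dependence in the evidence comes from the Gaussian prior $\log p(\btheta_{\MAP}\mid\alpha)$ and from the normalization/volume factor of the projected Gaussian, i.e.\ the pseudo-determinant of $\alpha^{-1}\U\U\T$ restricted to its $R$-dimensional range.

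Next I would collect the $\alpha$-dependent terms. The prior contributes a penalty governed by $\norm{\btheta_{\MAP}}^2$, while the (pseudo-)determinant factors contribute a $\log\alpha$ term whose coefficient is the effective dimension $d := P - \trace(\mathbb{I}_P - \mathcal{P}(\textsc{ggn}_{\btheta_{\MAP}}))$. Here I would use that $\U\U\T = \mathbb{I}_P - \mathcal{P}(\textsc{ggn})$ is an orthogonal projector, so $\trace(\mathbb{I}_P - \mathcal{P}(\textsc{ggn})) = R = \dim\ker$ and therefore $d = \trace\mathcal{P}(\textsc{ggn})$ is exactly the rank of the \textsc{ggn} — the number of parameter directions the prior volume contracts relative to the posterior volume, i.e.\ the directions the data constrains. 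The first-order condition $\partial_\alpha \ell(\alpha)=0$ then balances the prior penalty against this volume term, and solving the resulting scalar equation produces the closed form $\alpha^* = \norm{\btheta_{\MAP}}^2 / d$.

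Finally I would confirm global optimality. The reduced objective is a function of the single variable $\alpha$ consisting of a constant, a $\log\alpha$ term, and a monomial term; its second derivative has a fixed sign on $\alpha>0$, so the unique stationary point is the global maximum, and no boundary optimum competes since the objective tends to $-\infty$ as $\alpha\to0^+$ and as $\alpha\to\infty$. The step I expect to be most delicate is the dimensional and inversion bookkeeping around the degenerate covariance: one must normalize only over the $R$-dimensional range of $\U$ (using a pseudo-determinant rather than a full determinant) and track carefully whether $\alpha$ enters as a precision or as its inverse, since this is exactly what fixes both the coefficient of $\log\alpha$ (the \textsc{ggn} rank $d$) and whether $\norm{\btheta_{\MAP}}^2$ lands in the numerator or the denominator of $\alpha^*$.
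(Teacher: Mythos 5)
Your route is essentially the paper's: the paper likewise reduces the evidence to the scalar objective $-\tfrac{1}{2}\alpha\norm{\btheta_{\MAP}}^2 + \tfrac{d}{2}\log\alpha + C$ with $d = P - \trace(\mathbb{I}_P - \mathcal{P}(\textsc{ggn}_{\btheta_{\MAP}}))$, solves the first-order condition, and verifies concavity via the second derivative. The only difference of route is how the $\log\alpha$ coefficient is obtained: you take the pseudo-determinant of the degenerate covariance $\alpha^{-1}\U\U\T$ directly (using Lemma~\ref{lemma:3.1} to argue the data-fit term is constant on the support), whereas the paper substitutes the approximation $(\textsc{ggn}+\alpha\mathbb{I})^{-1} \approx \alpha^{-1}(\mathbb{I}_P - \mathcal{P}(\textsc{ggn}_{\btheta_{\MAP}}))$ of Lemma~\ref{lm:error_bound} into the \textsc{lla} evidence \eqref{eq: lml}. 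These amount to the same computation; your variant is arguably cleaner since it needs no appeal to an approximation of the \textsc{lla} covariance.

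However, the step you yourself flagged as delicate is exactly where your proposal is internally inconsistent --- and, notably, the paper's own proof shares the inconsistency. With $\alpha$ a \emph{precision}, which is the paper's convention since the covariance in \eqref{eq:approx_post} is $\alpha^{-1}\U\U\T$, the reduced objective you describe is
\begin{equation*}
\ell(\alpha) = -\tfrac{1}{2}\alpha\norm{\btheta_{\MAP}}^2 + \tfrac{d}{2}\log\alpha + C,
\end{equation*}
whose unique stationary point is $\alpha^* = d / \norm{\btheta_{\MAP}}^2$, \emph{not} $\norm{\btheta_{\MAP}}^2 / d$ as you conclude. The paper's penultimate line, $-\tfrac{1}{2}\norm{\btheta_{\MAP}}^2 + \tfrac{d}{2}\tfrac{1}{\alpha} = 0$, likewise has solution $\alpha = d/\norm{\btheta_{\MAP}}^2$, yet its displayed conclusion (and the lemma statement) inverts the fraction. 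The inverted quantity is the optimal prior \emph{variance}: fitting an isotropic Gaussian to the single observation $\btheta_{\MAP}$ along $d$ effective dimensions gives variance $\norm{\btheta_{\MAP}}^2/d$, hence precision $d/\norm{\btheta_{\MAP}}^2$. So your plan is sound and reproduces the paper's derivation, but carrying it out honestly yields the reciprocal of the printed closed form; you should resolve the precision-versus-variance bookkeeping explicitly rather than asserting the formula as stated, since asserting it contradicts the objective you derived.
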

In practice, the trace can be approximated using \citeauthor{hutchinson1989stochastic}'s estimator [\citeyear{hutchinson1989stochastic}].
 

\paragraph{Links to {\scriptsize LLA}.}
The projected posterior can be viewed as a fully correlated tractable approximation to the \textsc{lla}. The following statement shows that the difference between the two approximate posteriors is bounded.
\begin{lemma}\label{lm:error_bound}
   Let $\tau$ denote the smallest non-zero eigenvalue of the \textnormal{\textsc{ggn}}\citep{nguyen2021tight}, then
\begin{equation*}
    \Big\|
    \underbrace{(\alpha \mathbb{I}_P + \textnormal{\textsc{ggn}})^{-1}}_{\textnormal{\textsc{lla} cov}} 
    -
    \underbrace{\alpha^{-1}(\mathbb{I}_P - \mathcal{P}\left(\textnormal{\textsc{ggn}}_{\btheta_{\textsc{map}}}\right)}_{\textnormal{Proj cov (scaled)}}
    \Big\| \leq \frac{1}{\tau + \alpha}
\end{equation*}
Additionally, let $k$ denote the \textnormal{\textsc{ggn}} rank and $d$ the $W_2$ Wasserstein distance, then $d^2(q_{\textnormal{\textsc{lla}}}, q_{\textnormal{\textsc{proj}}}) \leq (\tau + \alpha)^{-1} k$.
%

\end{lemma}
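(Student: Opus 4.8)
The plan is to diagonalize both covariance matrices simultaneously in the eigenbasis of the \textsc{ggn}, which reduces both claims to elementary scalar bookkeeping over eigenvalues. Writing the eigendecomposition $\textsc{ggn} = \mat{V}\mat{D}\mat{V}\T$ with eigenvalues $d_1, \ldots, d_P \geq 0$, the \textsc{lla} covariance $(\alpha \mathbb{I}_P + \textsc{ggn})^{-1}$ is diagonal in $\mat{V}$ with entries $(\alpha + d_i)^{-1}$, while the scaled projected covariance $\alpha^{-1}(\mathbb{I}_P - \mathcal{P}(\textsc{ggn}))$ is diagonal in the \emph{same} basis with entries $\alpha^{-1}$ on the kernel (where $d_i = 0$) and $0$ on the image (where $d_i > 0$). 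The crucial structural fact is that both operators share the basis $\mat{V}$, so they commute and their difference is itself diagonal in $\mat{V}$.

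For the operator-norm bound I would subtract the two diagonal operators entrywise. On the kernel both entries equal $\alpha^{-1}$ and cancel to $0$; on the image the difference is $(\alpha + d_i)^{-1} - 0 = (\alpha + d_i)^{-1}$. Since the difference is symmetric, its operator norm is the largest of these entries, attained at the smallest nonzero eigenvalue $d_i = \tau$, giving exactly $(\tau + \alpha)^{-1}$. I would remark that the stated inequality is in fact tight and holds with equality (assuming the \textsc{ggn} is nonzero).

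For the Wasserstein bound I would first use that $q_{\textnormal{\textsc{lla}}}$ and $q_{\textnormal{\textsc{proj}}}$ share the mean $\btheta_{\MAP}$, so the mean contribution to $W_2^2$ vanishes and only the covariance (Bures) term survives. Because the two covariances commute, the term $\trace\!\big(\Sigma_1 + \Sigma_2 - 2(\Sigma_1^{1/2}\Sigma_2\Sigma_1^{1/2})^{1/2}\big)$ collapses to $\sum_i (\sqrt{\sigma_{1,i}} - \sqrt{\sigma_{2,i}})^2$, where $\sigma_{1,i}, \sigma_{2,i}$ are the paired eigenvalues in the common basis. Reusing the same eigenvalue bookkeeping, the kernel terms cancel and each image term contributes $(\alpha + d_i)^{-1}$, so $W_2^2 = \sum_{d_i > 0} (\alpha + d_i)^{-1}$. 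There are exactly $k$ nonzero eigenvalues, and each obeys $(\alpha + d_i)^{-1} \leq (\alpha + \tau)^{-1}$ since $d_i \geq \tau$, which yields $W_2^2 \leq k(\tau + \alpha)^{-1}$.

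The only step requiring care is justifying the commuting-case reduction of the Bures term: I would verify that, since $\Sigma_1, \Sigma_2$ are simultaneously diagonalizable positive semidefinite matrices, their square roots and the geometric mean $(\Sigma_1^{1/2}\Sigma_2\Sigma_1^{1/2})^{1/2}$ are all diagonal in $\mat{V}$, with the latter having entries $\sqrt{\sigma_{1,i}\sigma_{2,i}}$, so the trace factors over coordinates into $\sum_i (\sqrt{\sigma_{1,i}} - \sqrt{\sigma_{2,i}})^2$. Everything else is a routine scalar estimate, so I expect this commuting-covariance simplification to be the main (and essentially the only) technical point.
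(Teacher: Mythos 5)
Your proposal is correct and follows essentially the same route as the paper's proof: both arguments diagonalize the two covariances simultaneously in the \textsc{ggn} eigenbasis, observe that the difference is supported on the image of the \textsc{ggn} with entries $(\alpha+d_i)^{-1}$, and then evaluate the Gaussian $W_2$ formula with equal means using the commuting/projection structure to collapse the Bures term. Your added remark that the operator-norm bound actually holds with equality (when the \textsc{ggn} is nonzero) is a minor sharpening the paper does not state, but it does not change the substance of the argument.
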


\section{Sampling via alternating projections}\label{sec:algorithm}
We next develop a novel scalable algorithm to sample the projected posterior.
Recall that we can simulate this posterior as $\U \U\T \epsilon + \btheta_{\MAP}$, where $\epsilon \sim \N(\vec{0}, \alpha^{-1}\mathbb{I})$ and $\U\U\T$ projects to the \textsc{ggn} kernel.
In general, the orthogonal projection onto a subspace spanned by the columns of any matrix $\M \in \R^{R \times P}$, where $R < P$, is
\begin{equation}
    \label{eq:projection_basic_expression}
    \mathcal{P}\left(\M\T \M\right)
    =
    \M\T \left(\M\M\T\right)^{-1} \M \enspace\in \R^{P \times P}.
\end{equation}
The projection into the kernel of $\M\T\M$ is then given by the matrix $\mathbb{I} - \mathcal{P}\left(\M\T \M\right)$. In our case $\M = \J_{\btheta}\mat{H}_{\btheta}^{\nicefrac{1}{2}}  \in \R^{NO \times P}$ and \cref{eq:projection_basic_expression} requires inverting a $NO \!\times\! NO$ matrix, which is infeasible even for moderate datasets.

\paragraph{Intersections of kernels.}
To arrive at a feasible algorithm, we note that the kernel of $\M\T \M$ is the kernel of a sum of positive semi-definite matrices,
\begin{align}
\label{eq: sum_ker}
\textstyle
  \mathrm{ker}\left(\sum_b \M_b\T \M_b\right) = \bigcap_b \mathrm{ker}(\M_b\T \M_b), 
\end{align}
where we have decomposed $\M$ into batches,
\begin{align}
    \M =
    \begin{pmatrix}
        \M_1 \\
        \vdots \\
        \M_B
    \end{pmatrix}
    \qquad
    \M\T \M
    =
    \sum_{b=1}^B \M_b\T \M_b.
\end{align}
Hence, we can project onto the \textsc{ggn} kernel by projecting onto the \emph{intersection} of \emph{per batch}-\textsc{ggn} kernels.

\begin{SCfigure}[1.2][t]
    \centering
    \includegraphics[width=0.45\linewidth]{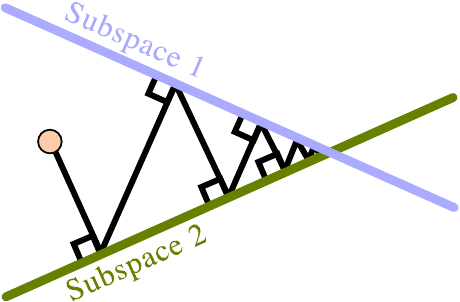}
    \caption{To project a point onto the intersection of two subspaces, we alternate between projecting onto the individual subspaces.}
    \label{fig:altproj}
\end{SCfigure}

\textbf{\mbox{\citeauthor{Neumann1949OnRO}'s [\citeyear{Neumann1949OnRO}] alternating projections}} algorithm projects onto intersections of subspaces. Fig.~\ref{fig:altproj} illustrates this idea, formalized in Lemma~\ref{lemma:alternating_projections}.
\begin{lemma}
    \label{lemma:alternating_projections}
    For any row-partition of a matrix $\M =
    \begin{pmatrix}
        \M_1\T &
        \cdots &
        \M_B\T
    \end{pmatrix}\T$ it holds that
    \begin{equation}
      \label{eq:altproj}
        \mathbb{I} - \mathcal{P}
        \Big(
        \M\T \M
        \Big) = 
        \lim_{t\rightarrow\infty}
        \Big(
          {\textstyle
            \prod_b
          }
            \left(
                \mathbb{I} - \mathcal{P}(\M_b\T \M_b)
            \right)
        \Big)^t.
    \end{equation}
    Moreover, the limit converges linearly with a rate $c\!=\!\prod_{b=1}^B \cos^2(\theta_b)$ where $\theta_b = \min_{b'\not=b} \angle(M_b,M_{b'})$ and $\angle(M_b,M_{b'})$ is the minimum angle between linear combinations of rows of $M_b$ and linear combinations of rows of $M_{b'}$. 
\end{lemma}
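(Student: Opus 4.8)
The statement is von Neumann's alternating-projections theorem specialized to the kernels of the per-batch blocks, augmented with a linear rate. First I would recast everything in terms of orthogonal projectors. Write $P_b := \mathbb{I} - \mathcal{P}(\M_b\T\M_b)$. Since $\mathcal{P}(\M_b\T\M_b)$ is the orthogonal projection onto $\mathrm{range}(\M_b\T\M_b) = \mathrm{range}(\M_b\T) = (\mathrm{ker}\,\M_b)^\perp$, each $P_b$ is the orthogonal projector onto $S_b := \mathrm{ker}(\M_b\T\M_b) = \mathrm{ker}(\M_b)$. By \eqref{eq: sum_ker} the target operator $\mathbb{I} - \mathcal{P}(\M\T\M)$ is the orthogonal projector $P_S$ onto $S := \bigcap_b S_b$. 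With $T := \prod_b P_b$, the identity \eqref{eq:altproj} is precisely $T^t \to P_S$ as $t \to \infty$.

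For the convergence I would run the classical contraction argument. Each $P_b$ is self-adjoint with $\|P_b\| = 1$, so $T$ is a contraction. I would first identify its fixed set: $S \subseteq \mathrm{Fix}(T)$ is immediate because $x \in S_b \Rightarrow P_b x = x$; conversely $Tx = x$ forces equality throughout $\|x\| = \|Tx\| \le \|P_{B-1}\cdots P_1 x\| \le \cdots \le \|P_1 x\| \le \|x\|$, and $\|P_b y\| = \|y\|$ forces $P_b y = y$ by the Pythagorean identity, so $x \in S_b$ for all $b$ and thus $x \in S$. Because $T^\ast = P_1 \cdots P_B$ also fixes $S$ pointwise, $T$ leaves the orthogonal splitting $\R^P = S \oplus S^\perp$ invariant, giving $T^t = \mathbb{I}_S \oplus (T|_{S^\perp})^t$. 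It then remains to show $\rho(T|_{S^\perp}) < 1$: if $Tv = \lambda v$ with $v \in S^\perp \setminus \{\vec 0\}$ and $|\lambda| = 1$, the same equality-in-the-chain argument (over $\mathbb{C}$) yields $v \in S$, contradicting $v \in S^\perp$. Hence $(T|_{S^\perp})^t \to 0$ and $T^t \to P_S$.

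For the rate I would quantify the contraction of $T$ on $S^\perp$ through principal angles. The relevant quantity between two blocks is the Friedrichs angle between the kernels $S_b$ and $S_{b'}$, which by the complementation identity $c_F(S_b, S_{b'}) = c_F(S_b^\perp, S_{b'}^\perp)$ equals the minimal nonzero principal angle between the row spaces $\mathrm{row}(\M_b)$ and $\mathrm{row}(\M_{b'})$ — exactly the $\angle(\M_b, \M_{b'})$ of the statement. The single-factor estimate is that, modulo the common part $S$, the projector $P_b$ contracts vectors by at most $\cos\theta_b$, where $\theta_b = \min_{b' \neq b}\angle(\M_b,\M_{b'})$. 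Telescoping these cosines across the $B$ factors bounds $\|T|_{S^\perp}\|$, and squaring the resulting error bound for $\|T^t - P_S\|$ (equivalently passing to the two-sided product $T^\ast T$) produces the advertised linear rate $c = \prod_{b} \cos^2\theta_b$.

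I expect the rate to be the only genuine obstacle. The convergence half is a routine finite-dimensional contraction argument once the fixed-set computation is in place. Extracting the constant $\prod_b \cos^2\theta_b$ is more delicate: for $B = 2$ it is the classical Friedrichs-angle rate of von Neumann, but for $B$ blocks one must bound the norm of a product of non-commuting orthogonal projectors and keep track of the pairwise interactions, so the telescoping of cosines and the choice of $\theta_b$ as the minimal angle against \emph{all} other blocks are the steps needing the most care. The positivity $c < 1$ (equivalently every $\theta_b > 0$) is exactly the statement that the blocks meet only in $S$, which is guaranteed by the $\mathrm{Fix}(T) = S$ computation of the convergence step.
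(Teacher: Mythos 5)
Your convergence argument is sound, and it already does more than the paper itself: the appendix contains no proof of this lemma at all, only a pointer to von Neumann's notes for two subspaces, to Bregman for convex sets, and to Kayalar--Weinert and Smith--Solmon--Wagner for rates. Your recasting into projectors $P_b = \mathbb{I} - \mathcal{P}(\M_b\T\M_b)$ onto $S_b = \mathrm{ker}(\M_b)$, the identification of $\mathrm{Fix}(T)=S=\bigcap_b S_b$ via the norm-equality chain and Pythagoras, the invariance of the splitting $\R^P = S \oplus S^\perp$ (using that $T^\ast$ also fixes $S$ pointwise), and the spectral-radius argument on $S^\perp$ together constitute a complete and correct proof of the limit \eqref{eq:altproj} in finite dimensions.

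The rate, however, is where your proposal has a genuine gap, and it is exactly where you flagged difficulty. The ``single-factor estimate'' --- that $P_b$ contracts vectors by at most $\cos\theta_b$ modulo $S$ --- is false: $P_b$ acts as the identity on all of $S_b$, and $S_b \cap S^\perp$ is generically nonzero, so there are vectors in $S^\perp$ that $P_b$ does not contract at all. Contraction is a property only of the full non-commuting product, which is precisely why it cannot be obtained by telescoping per-factor constants. Moreover, pairwise angles are the wrong quantities for $B\ge 3$: the Smith--Solmon--Wagner bound (the result the paper leans on) has the form $\|T^t - P_S\| \le \bigl(1 - \prod_{i=1}^{B-1}\sin^2\varphi_i\bigr)^{t/2}$, where $\varphi_i$ is the Friedrichs angle between $S_i$ and the \emph{intersection} $S_{i+1}\cap\cdots\cap S_B$, not a minimum over pairwise angles. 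Indeed, the statement you were asked to prove appears to be only a loose paraphrase of the literature: already for $B=2$ it claims rate $\cos^2\theta_1\cos^2\theta_2 = \cos^4\theta_F$, whereas Kayalar--Weinert give the \emph{exact} value $\|(P_2P_1)^t - P_S\| = \cos^{2t-1}\theta_F$, i.e.\@ an asymptotic per-iteration rate of $\cos^2\theta_F$; the claimed constant is strictly too optimistic, so no correct argument can recover it. The honest options are either to reprove an SSW-type bound with intersection angles (proving a corrected version of the lemma) or to do what the paper does and cite the rate results.
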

In practice, we stop the algorithm after a number of iterations, so that the limit \eqref{eq:altproj} stops at $t = t_{\max}$. The induced error is then upper bound by $c^{t_{\max}}$.

\paragraph{Projection for the approximate posterior.}
The algorithm projects to the $\textsc{ggn}$ kernel, by iteratively projecting to the kernels of per-batch \textsc{ggn}s. Using \cref{eq:projection_basic_expression}, a single step of the algorithm requires inverting an $SO \!\times\! SO$ matrix, which is computationally cheap for small batch-sizes $S$. Fig.~\ref{fig:projection_matrices} illustrates this pipeline.

\begin{figure*}
  \centering
  \includegraphics[width=\linewidth]{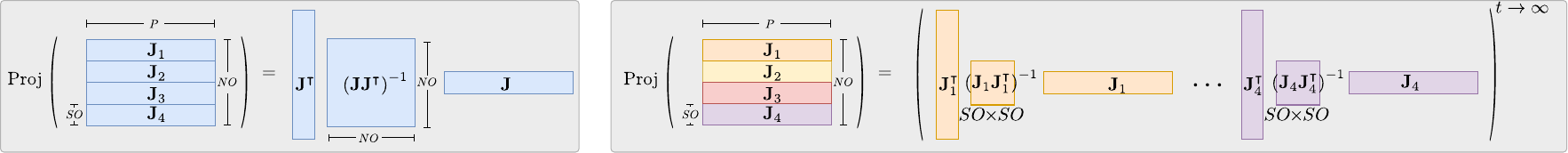}
  \vspace{-8mm}
  \caption{\textbf{Visualization of Jacobian projections} Direct calculation of the projection(left) involves inverting a large $NO\!\times\! NO$ matrix. This is replaced by an infinite (in practice, truncated) series of cheap projections (right) which only require precomputing and storing inverses of several small $SO \!\times\! SO$ matrices.}
  \label{fig:projection_matrices}
\end{figure*}

\paragraph{Implementation details.} 
For sufficiently small batches, we can precompute and store the inverses of $\M_b \M_b\T \in \R^{SO \!\times\! SO}$. Thereafter, matrix-vector products with $\M$ and $\M\T$ can be efficiently computed using Jacobian-vector products and vector-Jacobian products, respectively. We, thus, do not need to instantiate the per-batch \textsc{ggn} matrices, which gives rise to an efficient low-memory sampling algorithm.

\paragraph{Complexity.} The space complexity of the algorithm is $\mathcal{O}(P + NSO^2)$ and time complexity is $\mathcal{O}(t_{\max}PN+N S^2 O^3)$. For overparametrized models where $P \gg NO$ these complexities boil down to $\mathcal{O}(P)$ and $\mathcal{O}(t_{\max}PN)$, respectively. This matches the cost of training for $t_{\max}$ epochs.
Consequently, the projection is feasible on hardware that can train a model.

\subsection{Scaling to high-dimensional outputs} 
The computational complexity of the projection algorithm scales superlinearly with the model's output size. For many tasks this is unproblematic, but it renders the method inapplicable to e.g.\@ image-generative models.

To handle high-dimensional outputs, we propose removing the requirement of preserving the predictions of $\btheta_{\MAP}$ at the training data. Instead, we propose to (locally) preserve the \emph{loss} at each training point.
%
%
%
This is achieved by considering the loss-Jacobian, i.e.\@ the stacking of the per-datum loss-gradients. 
\begin{equation}
    \J^L_{\btheta} = \begin{pmatrix}
        \nabla_{\btheta}l(f(\btheta,\x_1),\y_1) \\
        \vdots \\
        \nabla_{\btheta}l(f(\btheta,\x_N),\y_N)
    \end{pmatrix}
    \in \R^{N \times P}
\end{equation}
\begin{lemma}
    The kernel of the stacked loss gradients $\J^L_{\btheta} \in \R^{N \times P}$ contains the kernel of the full Jacobian $\J_{\btheta} \in \R^{NO \times P}$, i.e.\@ $\mathrm{ker}(\J_{\btheta}) \subseteq \mathrm{ker}(\J^L_{\btheta})$.
    Further, note that these subspaces are identical for $O=1$.
\end{lemma}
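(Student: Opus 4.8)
The plan is to reduce the whole statement to a single application of the chain rule, which expresses each per-datum loss gradient as a fixed linear combination of the rows of the corresponding Jacobian block. Concretely, writing $\vec{g}_i := \nabla_{f} l(f(\btheta, \x_i), \y_i) \in \R^O$ for the output-space loss gradient, the chain rule gives
\begin{equation*}
    \nabla_{\btheta} l(f(\btheta, \x_i), \y_i) = \J_{\btheta}(\x_i)\T \vec{g}_i \in \R^P,
\end{equation*}
so the $i$-th row of $\J^L_{\btheta}$ equals $\vec{g}_i\T \J_{\btheta}(\x_i)$ and therefore lies in the row space of the per-datum block $\J_{\btheta}(\x_i)$.

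For the containment, I would take an arbitrary $\vec{v} \in \mathrm{ker}(\J_{\btheta})$. Since $\J_{\btheta}$ is the vertical stacking of the blocks $\J_{\btheta}(\x_i)$, the condition $\J_{\btheta}\vec{v} = \vec{0}$ is equivalent to $\J_{\btheta}(\x_i)\vec{v} = \vec{0}$ for every $i$. Substituting into the factorization, the $i$-th entry of $\J^L_{\btheta}\vec{v}$ is $\vec{g}_i\T (\J_{\btheta}(\x_i)\vec{v}) = 0$ for all $i$, hence $\vec{v} \in \mathrm{ker}(\J^L_{\btheta})$. This yields $\mathrm{ker}(\J_{\btheta}) \subseteq \mathrm{ker}(\J^L_{\btheta})$ unconditionally.

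For the $O=1$ claim, each block $\J_{\btheta}(\x_i)$ is a single row and each $\vec{g}_i$ is a scalar, so $\J^L_{\btheta} = \mathrm{diag}(g_1, \ldots, g_N)\,\J_{\btheta}$. Left-multiplication by an invertible matrix preserves the kernel, so the two kernels coincide as soon as the diagonal factor is invertible. This is the one place where care is needed, and the main (mild) obstacle: the equality is not truly unconditional but requires the non-degeneracy $g_i \neq 0$ for every $i$. If some output-loss gradient vanishes, the corresponding row of $\J^L_{\btheta}$ is zero, which relaxes a constraint and can strictly enlarge the kernel. I would state this assumption explicitly and note that it holds generically — e.g.\ for squared-error loss $g_i = f(\btheta, \x_i) - \y_i$ is nonzero exactly when datum $i$ is not perfectly fit.
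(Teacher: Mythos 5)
Your containment argument is exactly the paper's proof: the chain rule writes each row of $\J^L_{\btheta}$ as $\vec{g}_i\T \J_{\btheta}(\x_i)$, i.e.\ a linear combination of the rows of the corresponding Jacobian block, and any $\vec{v}$ annihilated by every block $\J_{\btheta}(\x_i)$ is therefore annihilated by every row of $\J^L_{\btheta}$. Where you go beyond the paper is the $O=1$ claim: the appendix proof establishes only the containment and never addresses the asserted equality of the two kernels when $O=1$. Your factorization $\J^L_{\btheta} = \mathrm{diag}(g_1,\ldots,g_N)\,\J_{\btheta}$ and the resulting caveat are correct — if some $g_i$ vanishes (e.g.\ a perfectly fit datum under squared error), the corresponding row of $\J^L_{\btheta}$ is zero and $\mathrm{ker}(\J^L_{\btheta})$ can be strictly larger than $\mathrm{ker}(\J_{\btheta})$, so the ``identical for $O=1$'' remark holds only under the non-degeneracy condition $g_i \neq 0$ for all $i$, not unconditionally. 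Making that assumption explicit is a genuine sharpening of the paper's statement rather than a gap in your proof.
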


Intuitively, for each datapoint, the gradient is a linear combination of its Jacobian rows, namely $\nabla_{\btheta}l(f(\btheta,\x),\y) =\nabla_{f(\btheta,\x)}l(f(\btheta,\x),\y)  \J_{\btheta}(\x)$. This approach can, thus, be seen as aggregating each per-datum Jacobian into a single meaningful row, lowering the row count by a factor $O$.

We propose the \emph{loss-projected approximate posterior}
\begin{equation}
    q_{\textnormal{\textsc{loss}}}(\btheta | \btheta_{\textsc{map}}, \mathcal{D})
    =
    \mathcal{N}\left(\btheta_{\textsc{map}}, \alpha^{-1}\U_L\U_L\T \right),
    \label{eq:approx_post_loss}
\end{equation} 
where $\U_L \in \R^{P \times R}$ denotes an orthogonal basis of the kernel of $\J^L_{\btheta}$. Samples from this approximate posterior are guaranteed to have the same per-datum loss as the mode parameter, up to first order,
%
\begin{restatable}{lemma}{BoundedLossKernelError}
  \label{lm:loss_proj_err}
    For any $\theta\sim q_{\textnormal{\textsc{loss}}}$ and $\x_n \in \mathcal{D}$ it holds
    \begin{align*}
        |l(f(\btheta, \x_n), \y_n) 
        \!-\!
        l(f(\btheta_{\MAP}, \x_n), \y_n)|
        = \mathcal{O}(\lVert \btheta \!-\! \btheta_{\MAP} \rVert^2)
    \end{align*}
    which implies that
    \begin{equation}
        \textnormal{\textsc{Var}}_{\btheta \sim q_{\textnormal{\textsc{loss}}}} l(f(\btheta, \x_n), \y_n)
        \leq
        \mathcal{O}(\alpha^2) .
    \end{equation}
\end{restatable}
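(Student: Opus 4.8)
The plan is to Taylor-expand the per-datum loss $g_n(\btheta) := l(f(\btheta,\x_n),\y_n)$ about the mode and exploit the defining property of $q_{\textnormal{\textsc{loss}}}$: every sample lies exactly in the kernel of the loss-Jacobian, so the linear term of the expansion is annihilated. Concretely, I would write
\[
  g_n(\btheta) = g_n(\btheta_{\MAP}) + \nabla_{\btheta} g_n(\btheta_{\MAP})\T (\btheta - \btheta_{\MAP}) + \mathcal{O}\!\left(\norm{\btheta - \btheta_{\MAP}}^2\right),
\]
and note that $\nabla_{\btheta} g_n(\btheta_{\MAP}) = \nabla_{\btheta} l(f(\btheta_{\MAP},\x_n),\y_n)$ is precisely the $n$-th row of $\J^L_{\btheta_{\MAP}}$.

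The key step is that a sample $\btheta\sim q_{\textnormal{\textsc{loss}}}$ satisfies $\btheta - \btheta_{\MAP} \in \mathrm{range}(\U_L) = \ker(\J^L_{\btheta_{\MAP}})$, since the covariance $\alpha^{-1}\U_L\U_L\T$ is a scaled projection onto $\mathrm{range}(\U_L)$ and is therefore supported there. Hence $\J^L_{\btheta_{\MAP}}(\btheta - \btheta_{\MAP}) = \vec{0}$, and in particular the inner product of the $n$-th row with $\btheta - \btheta_{\MAP}$ vanishes. The first-order term of the expansion is thus identically zero, leaving $|g_n(\btheta) - g_n(\btheta_{\MAP})| = \mathcal{O}(\norm{\btheta - \btheta_{\MAP}}^2)$, which is the first claim.

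For the variance I would bound $\textnormal{\textsc{Var}}_{\btheta\sim q_{\textnormal{\textsc{loss}}}} g_n(\btheta) \leq \E\!\left[(g_n(\btheta) - g_n(\btheta_{\MAP}))^2\right]$, using that $g_n(\btheta_{\MAP})$ is constant. By the pointwise bound this equals $\mathcal{O}\!\left(\E\,\norm{\btheta - \btheta_{\MAP}}^4\right)$. Since $\btheta - \btheta_{\MAP}$ is Gaussian with covariance $\alpha^{-1}\U_L\U_L\T$ (a rank-$R$ projection scaled by $\alpha^{-1}$), its second moment is $\alpha^{-1}R$ and its fourth moment scales as the square of the covariance scale $\alpha^{-1}$, which yields the claimed $\mathcal{O}$-bound in $\alpha$.

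The delicate part is not the algebra — the vanishing of the linear term is immediate — but making the remainder legitimate under the expectation. The symbol $\mathcal{O}(\norm{\btheta - \btheta_{\MAP}}^2)$ arises from a local second-order expansion, yet a Gaussian sample may, with small probability, land far from $\btheta_{\MAP}$ where that expansion is invalid. To turn the pointwise estimate into a genuine moment bound I would need uniform (or at least polynomially-growing, hence Gaussian-integrable) control of the Hessian of $g_n$ restricted to the directions in $\mathrm{range}(\U_L)$, so that the tail contributions are dominated and the fourth-moment computation goes through. This smoothness and integrability bookkeeping is where the real care is required.
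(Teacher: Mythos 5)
Your proof follows essentially the same route as the paper's: a first-order Taylor expansion of the per-datum loss whose linear term vanishes because samples from $q_{\textnormal{\textsc{loss}}}$ lie in $\mathrm{ker}(\J^L_{\btheta_{\MAP}})$, followed by Gaussian moment scaling in $\alpha$ for the variance bound (the paper expands $f$ and $l$ separately and chains them, which is equivalent to your direct expansion of the composition $g_n$). If anything, you are more careful than the paper on the variance step: the paper asserts it ``directly follows'' from the precision scale, whereas you correctly flag that converting the local $\mathcal{O}(\lVert \btheta - \btheta_{\MAP} \rVert^2)$ remainder into a genuine moment bound requires integrable control of the Hessian over the Gaussian tails.
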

In particular, these results hold regardless of whether we use the linearized model $f_{\textnormal{lin}}^{\btheta_{\MAP}}$ or the standard one $f$.\looseness=-1

\paragraph{Complexity.} The space complexity of the algorithm is $\mathcal{O}(P + NS)$ and the time complexity is $\mathcal{O}(t_{\max}PN + N S^2)$.

\section{Related work}\label{sec:related_work}
Deep neural networks are known to suffer from poor calibration, where predictive uncertainties are often indistinguishable between in-distribution and out-of-distribution data \citep{guo2017calibration}. \emph{Deep ensembles} \citep{Lakshminarayanan2016deepensembles} remains one of the most widely used solutions as it has strong empirical performance. Unfortunately, this approach requires retraining multiple models from scratch, making it computationally prohibitive for large models.

\textbf{Gaussian approximate posteriors} are commonly used to avoid retraining from scratch and instead only explore the mode of a single model. Established approaches in this area include \emph{Bayes by backprop} \citep{blundell2015weight}), \emph{stochastic weight averaging (\textsc{swag})} \citep{maddox2019simple}, and \emph{Laplace approximations} \citep{daxberger2021laplace}. Our work can be viewed as a variant of the Laplace approximation that avoids the underfitting commonly seen in Gaussian posteriors.

\textbf{The computational costs} of working with high-dimensional Gaussians can be daunting. Common workarounds include using low-rank or sparse covariances \citep{maddox2019simple, ritter2018scalable, deng2022accelerated}, subspace inference approaches \citep{izmailov2020subspace}, and subnetwork inference \citep{daxberger2021bayesian}. These methods reduce computational costs while preserving strong predictive performance. In contrast, our method additionally provides a rigorous theoretical justification for the choice of subspace. We can infer properties of distributions within this subspace and derive formal error bounds, a level of theoretical grounding often lacking in other approaches.

\citet{antoran2023samplingbased} use a sample-then-optimize procedure to simulate the \textsc{lla} posterior, which has some algorithmic similarities to our work. We both simulate a simple distribution followed by a gradient-based iterative refinement. While \citet{antoran2023samplingbased} focus on \textsc{lla}, we aim to avoid underfitting, which are potentially complementary objectives. Here we emphasize that our approach is general and can be applied to any Bayesian approximation in a post-hoc manner.


\textbf{A frequentist perspective} was given by \citet{Madras2020Detecting}. Their \emph{local ensembles} perturb a \textsc{map} estimate in the kernel of the network's Hessian. Algorithmically, they approximate the largest Hessian eigenvectors using Lanczos's decomposition \citep{meurant2006lanczos} and assume that the orthogonal complement is the Hessian kernel. We improve this in several ways. First, we show that the \textsc{ggn} kernel is more appropriate than the Hessian kernel. Secondly, we propose a significantly more efficient algorithm that projects onto the \textsc{ggn} kernel by leveraging its partitioned structure. Our method guarantees exact convergence to the kernel, and it is not memory-limited as it avoids storing Lanczos vectors.

\textbf{Alternating projections} first appeared in \citeauthor{Neumann1949OnRO}'s notes on operator theory [\citeyear{Neumann1949OnRO}], while \citet{kayalar1988error} proved the convergence rate. To our knowledge, the only recent machine learning use-case is inverting Gram matrices in Gaussian process regression \citep{wu2024large}. This is rather different from our work.

\section{Experiments}\label{sec:results}
We benchmark our \emph{projected} and \emph{loss-projected} posteriors against popular post-hoc Bayesian methods, including \textsc{swag}, and diagonal and last-layer Laplace. We also include the prediction \textsc{map}.
Baseline prior precisions are tuned via grid search, while projection posteriors use the optimal prior precision \eqref{eq:opt_alpha}. Details on models and hyperparameters are in Appendix C.

\subsection{Toy regression}



\begin{figure}[t]
  \centering
  \includegraphics[width=\linewidth]{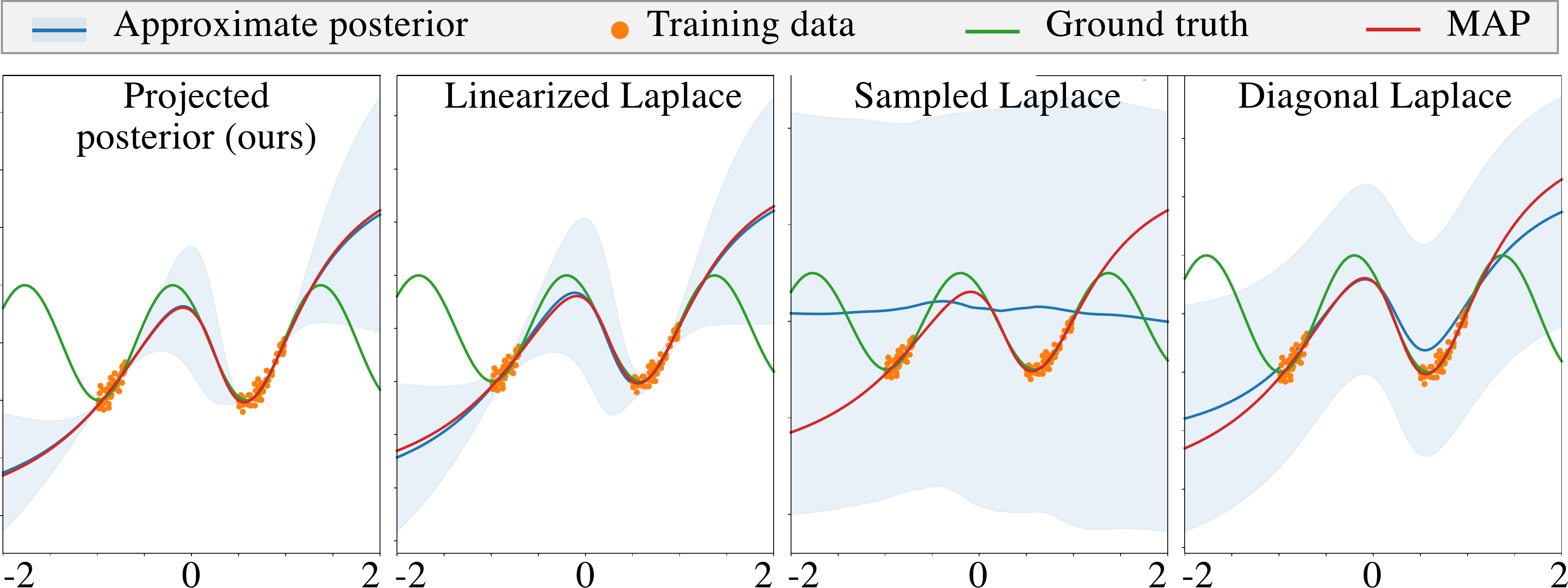}
  \vspace{-8mm}
  \caption{The sparse approximations of the posterior fail to capture in-between uncertainty whereas fully correlated posteriors can capture them.}
  \label{fig:inbetween}
\end{figure}


\begin{mdframed}[backgroundcolor=myblue, linecolor=blue!75!black]
\textbf{Hypothesis:} The projected posterior captures in-between uncertainties better than sparsely correlated Laplace.
\end{mdframed}
We first assess different Laplace approximations in a toy regression task. \citet{foong2019between} shows that Bayesian neural networks with mean-field variational inference fail to capture in-between uncertainties. 
We observe a similar issue with sparsely correlated Laplace approximations (Fig.~\ref{fig:inbetween}). In contrast, the projected posterior correlates all parameters, improving predictive uncertainty estimates for unseen regions. \emph{This illustrates that maintaining full parameter correlations is essential for accurate in-between uncertainties.}

\subsection{Predictive uncertainties on standard image classification}


\begin{mdframed}[backgroundcolor=myblue, linecolor=blue!75!black]
\textbf{Hypothesis:} The projected posterior provide competitive or superior predictive uncertainties across tasks, including in-distribution calibration, robustness to distribution shifts, and out-of-distribution detection.
\end{mdframed}

\paragraph{In-distribution performance (Table~\ref{tab:id}).}
We next consider standard image classification tasks. We train a LeNet \citep{lecun1989backpropagation} on MNIST and Fashion MNIST, and a ResNet \citep{he2016deep} on CIFAR-10 \citep{krizhevsky2009learning} and SVHN \citep{netzer2011reading}. Keeping the \textsc{map} fixed we assess the predictive posterior of all methods using metrics such as confidence, accuracy, negative log-likelihood (NLL), Brier score \citep{VERIFICATIONOFFORECASTSEXPRESSEDINTERMSOFPROBABILITY}, expected calibration error (ECE), and maximum calibration error (MCE) \citep{naeini2015obtaining}.\looseness=-1

As shown in Table~\ref{tab:id}, the projected posterior matches the \textsc{map} in-distribution, as expected from Lemma~\ref{lemma:3.1}, with zero variance in-distribution and higher variance out-of-distribution. The loss-projected posterior improves calibration, which is discussed in Appendix B.




\begin{table*}
  \caption{In-distribution performance across methods trained on \textsc{mnist}, \textsc{fmnist} \smaller[0.80]{SVHN} and \smaller[0.80]{CIFAR-10}.}
  \label{tab:id}
  %
  \setlength{\aboverulesep}{0pt}
  \setlength{\belowrulesep}{0pt}
  \setlength{\extrarowheight}{.75ex}
  \resizebox{\textwidth}{!}{
    \rotatebox[origin=c]{90}{MNIST~~~~~}\hspace{3mm}
    \begin{tabular}{lcccccc}
    \toprule \rowcolor{gray!30}
                            & Conf.~($\uparrow$)           & NLL~($\downarrow$)          & Acc.~($\uparrow$)           & Brier~($\downarrow$)        & ECE~($\downarrow$)        & MCE~($\downarrow$) \\
    \midrule
    MAP &
            0.981\small{±0.002} &
            0.080\small{±0.005} &
            \small{\first{0.977\small{±0.002}}} &
            1.775\small{±0.004} &
            0.787\small{±0.009} &
            0.895\small{±0.014} \\
    Projected posterior (ours)  &
            0.981\small{±0.002} &
            0.080\small{±0.005} &
            \small{\first{0.977\small{±0.002}}} &
            1.774\small{±0.004} &
            0.787\small{±0.009} &
            0.895\small{±0.014} \\
    Loss-projected posterior (ours) &
            0.813\small{±0.018} &
            1.225\small{±0.099} &
            0.949\small{±0.000} &
            \small{\first{1.532\small{±0.028}}} &
            \small{\first{0.666\small{±0.007}}} &
            0.894\small{±0.011} \\
    SWAG &
            \small{\first{0.982\small{±0.001}}} &
            \small{\first{0.064\small{±0.006}}} &
            0.982\small{±0.000} &
            1.776\small{±0.002} &
            0.788\small{±0.005} &
            0.906\small{±0.013} \\
    Last-Layer & 0.977\small{±0.002} & 0.090\small{±0.005} &
            0.975\small{±0.002} &
            1.768\small{±0.003} &
            0.784\small{±0.007} &
            \small{\first{0.887\small{±0.008}}} \\
    Diagonal &
            0.951\small{±0.008} &
            0.129\small{±0.016} &
            0.975\small{±0.001} &
            1.727\small{±0.012} &
            0.784\small{±0.008} &
            0.894\small{±0.015} \\

    \bottomrule
  \end{tabular} }
    %
  \resizebox{\textwidth}{!}{
    \rotatebox[origin=c]{90}{FMNIST}\hspace{3mm}
    \begin{tabular}{lcccccc}
    MAP &
        \small{\first{0.938\small{±0.005}}} &
        0.325\small{±0.011} &
        0.897\small{±0.002} &
        1.713\small{±0.008} &
        0.732\small{±0.006} &
        0.901\small{±0.006} \\
    Projected posterior (ours) &
        0.937\small{±0.005} &
        \small{\first{0.325\small{±0.011}}} &
        0.897\small{±0.002} &
        1.713\small{±0.008} &
        0.732\small{±0.006} &
        0.902\small{±0.006} \\
    Loss-projected posterior (ours) &
        0.744\small{±0.031} &
        1.529\small{±0.371} &
        0.871\small{±0.006} &
        \small{\first{1.441\small{±0.041}}} &
        \small{\first{0.617\small{±0.025}}} &
        0.901\small{±0.013} \\
    SWAG &
        0.931\small{±0.006} &
        0.327\small{±0.001} &
        \small{\first{0.898\small{±0.001}}} &
        1.703\small{±0.008} &
        0.725\small{±0.003} &
        0.907\small{±0.003} \\
    Last Layer &
        0.931\small{±0.005} &
        0.339\small{±0.011} &
        0.896\small{±0.002} &
        1.703\small{±0.008} &
        0.727\small{±0.004} &
        0.902\small{±0.004} \\

    Diagonal &
        0.735\small{±0.024} &
        0.922\small{±0.051} &
        0.855\small{±0.000} &
        1.431\small{±0.033} &
        0.621\small{±0.021} &
        \small{\first{0.895\small{±0.021}}} \\

    \bottomrule
  \end{tabular} }
   %
  \resizebox{\textwidth}{!}{
    \rotatebox[origin=c]{90}{SVHN}\hspace{3mm}
    \begin{tabular}{lcccccc}
    MAP &
            \small{\first{0.949\small{±0.004}}} &
            0.188\small{±0.004} &
            \small{\first{0.949\small{±0.002}}} &
            1.691\small{±0.004} &
            0.740\small{±0.012} &
            0.889\small{±0.004} \\
    Projected posterior (ours) &
            \small{\first{0.949\small{±0.004}}} &
            0.188\small{±0.004} &
            \small{\first{0.949\small{±0.002}}} &
            1.691\small{±0.004} &
            0.740\small{±0.012} &
            0.889\small{±0.007} \\
    Loss-projected posterior (ours) &
            0.948\small{±0.005} &
            0.191\small{±0.007} &
            \small{\first{0.949\small{±0.003}}} &
            1.685\small{±0.013} &
            \small{\first{0.734\small{±0.017}}} &
            0.880\small{±0.012} \\
    SWAG & 0.897±0.007 & 0.217±0.014 & 0.947±0.004 & \small{\first{1.606±0.010}} & 0.745±0.007 & \small{\first{0.874±0.003}} \\
    Last-Layer &
            0.943\small{±0.005} &
            0.197\small{±0.009} &
            0.946\small{±0.001} &
            1.681\small{±0.006} &
            0.740\small{±0.007} &
            0.899\small{±0.009} \\
    Diagonal &
            0.948\small{±0.003} &
            \small{\first{0.187\small{±0.005}}} &
            \small{\first{0.949\small{±0.002}}} &
            1.689\small{±0.003} &
            0.742\small{±0.011} &
            0.899\small{±0.000} \\

    \bottomrule
  \end{tabular} }
    %
  \resizebox{\textwidth}{!}{
    \rotatebox[origin=c]{90}{CIFAR-10}\hspace{3mm}
    \begin{tabular}{lcccccc}
    MAP &
            \small{\first{0.952\small{±0.002}}} &
            0.422\small{±0{.011}} &
            \small{\first{0.894\small{±0.002}}} &
            1.698\small{±0.038} &
            0.703\small{±0.013} &
            0.878\small{±0.009} \\
    Projected posterior (ours) &
            \small{\first{0.952\small{±0.001}}} &
            0.422\small{±0{.011}} &
            \small{\first{0.894\small{±0.002}}} &
            1.698\small{±0.038} &
            0.703\small{±0.013} &
            0.878\small{±0.002} \\
    Loss-projected posterior (ours) &
            0.701\small{±0.013} &
            2.643\small{±0.205} &
            0.855\small{±0.002} &
            \small{\first{1.387\small{±0.018}}} &
            \small{\first{0.559\small{±0.006}}} &
            \small{\first{0.802\small{±0.005}}} \\

    SWAG &
            0.914\small{±0.035} &
            0.445\small{±0.063} &
            0.865\small{±0.029} &
            1.670\small{±0.049} &
            0.694\small{±0.018} &
            0.881\small{±0.005} \\

    Last-Layer &
            0.944\small{±0.001} &
            \small{\first{0.406\small{±0.005}}} &
            \small{\first{0.894\small{±0.001}}} &
            1.712\small{±0.001} &
            0.704\small{±0.000} &
            0.880\small{±0.007} \\
    Diagonal &
            0.934\small{±0.028} &
            0.465\small{±0.069} &
            0.872\small{±0.032} &
            1.698\small{±0.038} &
            0.703\small{±0.013} &
            0.873\small{±0.005} \\
    \bottomrule
  \end{tabular} }
\end{table*}

\begin{table*}
  \caption{Out-of-distribution \textsc{auroc}~($\uparrow$) performance for \textsc{mnist}, \textsc{fmnist}, \smaller[0.80]{SVHN} and \smaller[0.80]{CIFAR-10}.}
  \label{tab:ood}
  %
  \setlength{\aboverulesep}{0pt}
  \setlength{\belowrulesep}{0pt}
  \setlength{\extrarowheight}{.75ex}
  %
  \resizebox{\textwidth}{!}{
    \begin{tabular}{lcccccccccc}
    \toprule 
    \rowcolor{gray!30} Trained on & \multicolumn{3}{c}{\rule[0.9mm]{20mm}{0.1mm} \textsc{mnist} \rule[0.9mm]{20mm}{0.1mm}} & \multicolumn{3}{c}{\rule[0.9mm]{20mm}{0.1mm} \textsc{fmnist} \rule[0.9mm]{20mm}{0.1mm}} & \multicolumn{2}{c}{\rule[0.9mm]{12mm}{0.1mm} \smaller[0.80]{CIFAR-10} \rule[0.9mm]{12mm}{0.1mm}} & 
    \multicolumn{2}{c}{\rule[0.9mm]{12mm}{0.1mm} \smaller[0.80]{SVHN} \rule[0.9mm]{12mm}{0.1mm}}
    \\
    \rowcolor{gray!30} Tested on & \textsc{fmnist} & \textsc{emnist} & \textsc{kmnist} & \textsc{mnist} & \textsc{emnist} & \textsc{kmnist} & \textsc{svhn} & \smaller[0.80]{CIFAR-100} & \smaller[0.80]{CIFAR-10} & \smaller[0.80]{CIFAR-100} \\ 
    \midrule
    MAP & 0.924±0.015 & 0.888±0.010 & 0.846±0.019 & 0.714±0.001 & 0.788±0.016 & 0.664±0.012 & 0.874±0.002 & 0.816±0.026 & 
    0.960±0.007 & 0.954±0.008\\
    
    Proj.\@ (ours) & \first{0.926±0.013} & 0.904±0.004 & \first{0.862±0.011} & 0.861±0.029 & 0.844±0.032 & 0.867±0.012 & \first{0.881±0.002} & \first{0.836±0.004} & 
    0.960±0.002 & 0.957±0.001\\
    
    Loss-proj.\@ (ours) & 0.899±0.011 & 0.893±0.006 & 0.856±0.002 & \first{0.914±0.035} & \first{0.928±0.007} & \first{0.907±0.026} & 0.863±0.008 & 0.800±0.013 &  
    \first{0.966±0.009} & \first{0.960±0.006}\\
    SWAG & 0.917±0.024 & \first{0.916±0.010}  & 0.861±0.032 &
            0.685±0.014 & 0.751±0.032 & 0.655±0.015 & 0.798±0.040 & 0.782±0.042 & 
            0.777±0.029 & 0.787±0.027\\
    
    Last-Layer & 0.793±0.215 & 0.782±0.182 & 0.759±0.166 &
                   0.768±0.001 & 0.824±0.016 & 0.699±0.020 & 0.811±0.024 & 0.801±0.026 &  
                   0.914±0.005 & 0.908±0.005\\
    
    Diagonal & 0.772±0.218 & 0.768±0.191 & 0.745±0.174 & 
                0.726±0.028 & 0.725±0.034 & 0.683±0.013 & 0.836±0.022 & 0.806±0.027 &  
                0.917±0.005 & 0.916±0.004\\

    \bottomrule
  \end{tabular} }
\end{table*}

\begin{SCtable*} 
  \caption{In- and out-of-distribution metrics for SWIN transformer trained on ImageNet and tested on \textsc{places365}.}
  \label{tab:imagenet}
  %
  \setlength{\aboverulesep}{0pt}
  \setlength{\belowrulesep}{0pt}
  \setlength{\extrarowheight}{.75ex}
  \resizebox{0.7\textwidth}{!}{
    \begin{tabular}{lccccccc}
    \toprule 
    \rowcolor{gray!30}
    & Conf.~($\uparrow$)           & NLL~($\downarrow$)      & Acc.~($\uparrow$)           & Brier~($\downarrow$) 
    & ECE~($\downarrow$)        & MCE~($\downarrow$) 
    & AUROC ~($\uparrow$)             \\
    \midrule
    \textsc{map} & \first{0.626} & \first{1.358} & \first{0.726} & 0.407 & 0.136 & 0.583 & 0.751 \\
Loss-projected posterior & 0.618 & 1.427 & 0.716 & \first{0.394} & \first{0.107} & \first{0.259} & \first{0.756} \\
    \bottomrule
  \end{tabular} }
\end{SCtable*}

\begin{figure*}
  \includegraphics[width=0.49\linewidth]{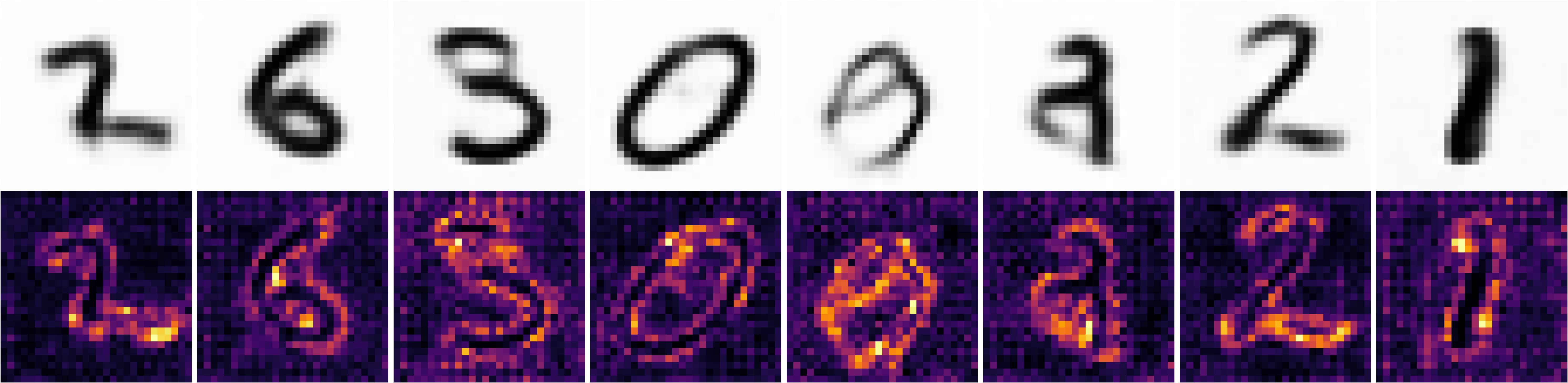}
  \includegraphics[width=0.49\linewidth]{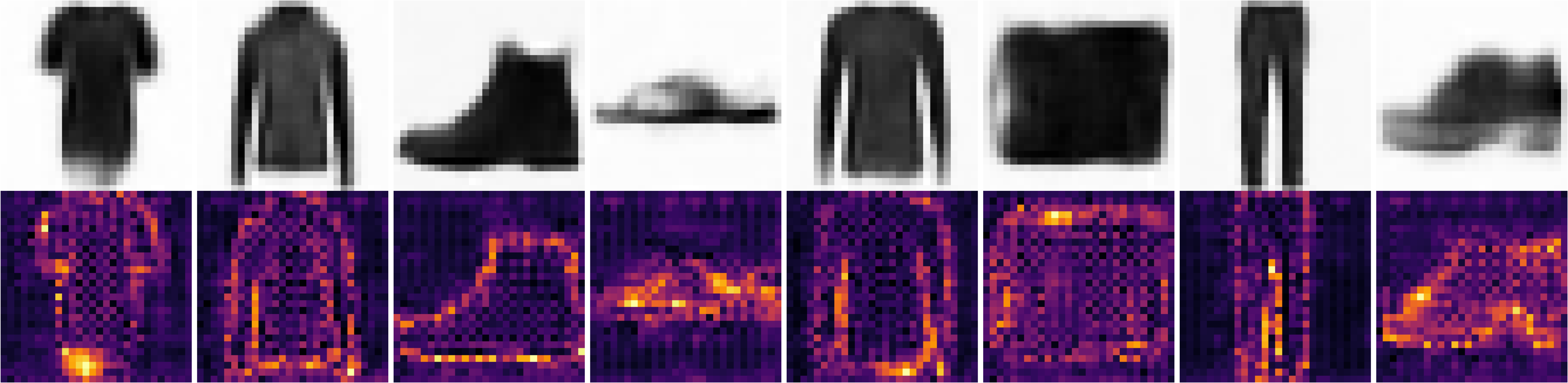}
  \vspace{-4mm}
  \caption{\textbf{Variational autoencoder reconstructions and corresponding uncertainty estimates.} \emph{Left:} mean reconstructions of MNIST and Fashion MNIST images sampled from the latent space. \emph{Right:} pixel-wise uncertainty estimates generated by sampling decoder parameters from the Loss Kernel Posterior, highlighting key semantic features such as edges and contours. This demonstrates Projected Laplace's ability to capture uncertainty in high-dimensional generative models.
}
  \label{fig:VAE variance}
\end{figure*}

\paragraph{Distribution shift (Fig.~\ref{fig:shift_plots}).}
%
%
To assess robustness under distribution shifts, we evaluate the robustness of predictive uncertainties using \textsc{rotated-mnist}, \textsc{rotated-fmnist}, and \textsc{corrupted cifar} (with fog and Gaussian blur). The loss-projected posterior maintains low, stable calibration error with increasing shift intensity, while the projected posterior retains high accuracy (Fig.~\ref{fig:shift_plots}).

\begin{figure}[t]
  \includegraphics[width=\linewidth]{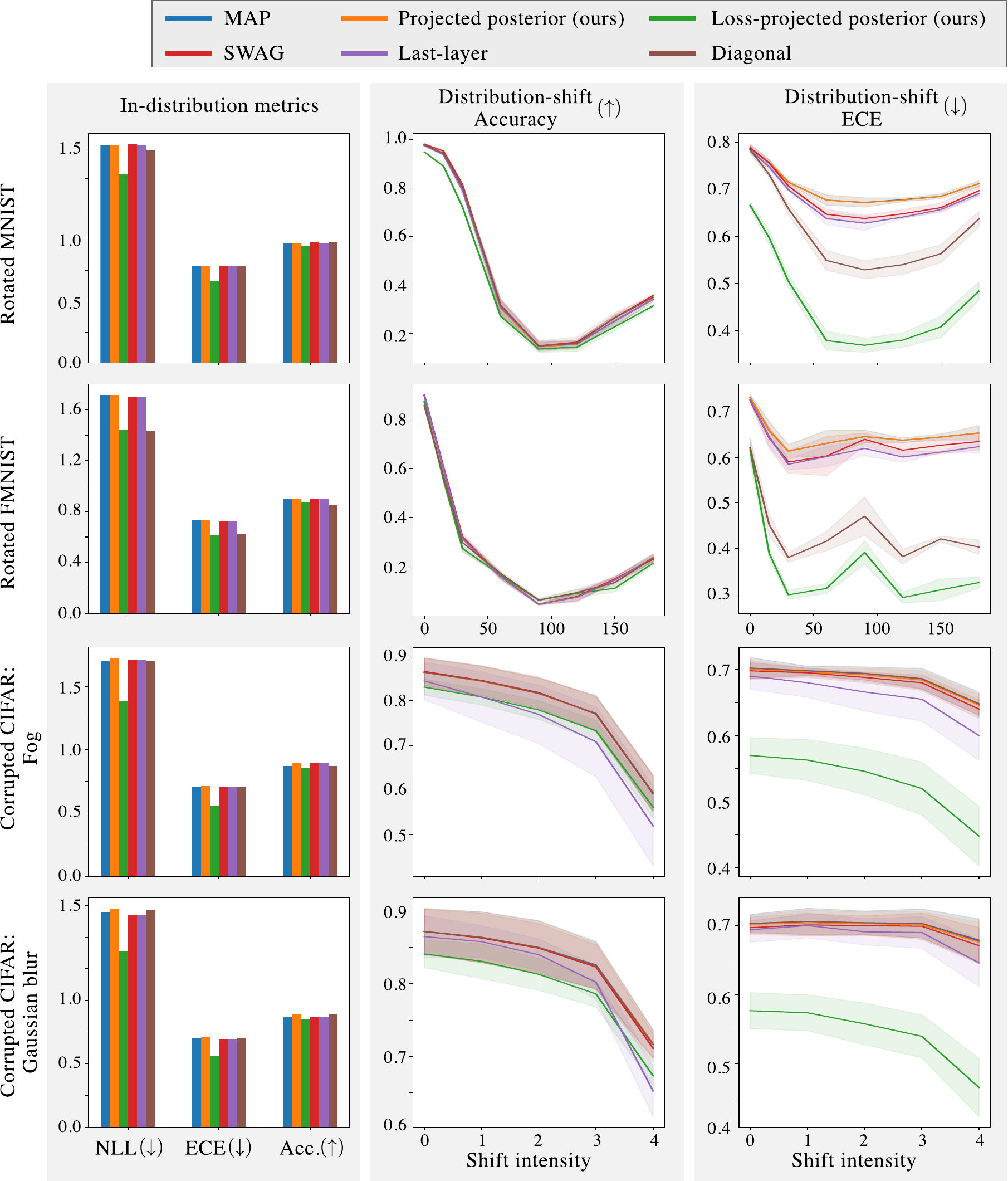}
    \vspace{-8mm}
    \caption{Model calibration and fit on in-distribution test data (left) and under distribution shift (middle, right) where we plot shift intensities against accuracy and expected calibration error (\textsc{ece}), respectively.} 
    
    \label{fig:shift_plots}
\end{figure}

\paragraph{Out-of-distribution detection (Table~\ref{tab:ood}).}
%
For out-of-distribution (\textsc{ood}) detection, we use the maximum variance of logits across output dimensions as an \textsc{ood} score for all Bayesian posteriors, while the \textsc{map} baseline employs maximum softmax, which is a strong baseline. Lemma~\ref{lemma:3.1} dictates that the in-distribution variance of the projected posterior is zero, providing a strong \textsc{ood} detection signal. Table~\ref{tab:ood} supports the theory, showing that the projected posterior achieves superior \textsc{auroc} scores across various \textsc{ood} tasks over baselines. 

\subsection{ImageNet and CelebA vision transformer}
\begin{mdframed}[backgroundcolor=myblue, linecolor=blue!75!black]
\textbf{Hypothesis:} The projected posterior scales effectively to large models and datasets, such as SWIN transformers trained on ImageNet and CelebA.
\end{mdframed}


We assess the scalability of our approach by sampling from the loss-projected posterior of a SWIN transformer \citep{liu2021swin} pre-trained on the ImageNet-1K dataset \citep{deng2009imagenet, russakovsky2015imagenet}, which includes about 1 million images of size $224\times224\times3$ with 1000 output classes. Table~\ref{tab:imagenet} shows marginal improvement in-distribution calibration over \textsc{map} alongside improved out-of-distribution detection on \textsc{places365} \citep{zhou2017places}. \emph{The main point is that our method scales gracefully to large models and datasets due to linear complexity.}

We further consider a 4M parameter Visual Attention network (\textsc{VAN})\citep{guo2023visual}. We train on \textsc{CelebA} \citep{liu2015faceattributes} holding out three classes. \cref{tab:results_celeba} reports \textsc{auroc} scores to measure the ability to detect the three held-out classes and \textsc{Food101} \citep{bossard2014food} as out-of-distribution. Both our method and baselines have a limited memory budget of $300P$  numbers. \textsc{lla} and local ensemble are implemented using Lanczos and we also include our projected score computed with Lanczos as a reference. 
The projected posterior outperforms the baselines on the three most challenging settings and is only second to \textsc{scod} in the last. The loss-projected posterior shows a minor drop in performance.

\begin{table}[t]
    \vspace{-3mm}
    \caption{\textsc{auroc} (3 seeds) for a \textsc{VAN} trained on \textsc{CelebA}. Details are in the appendix.}
    \label{tab:results_celeba}
    \setlength{\aboverulesep}{0pt}
    \setlength{\belowrulesep}{0pt}
    \setlength{\extrarowheight}{.75ex}
    \resizebox{\linewidth}{!}{
    \begin{tabular}{lcccc}
    \toprule 
    \rowcolor{gray!30}
        & Food101 & Bald only & Eyeglasses only & Mustache only \\ \midrule
    Max logit & 0.959$\pm$0.008  & 0.32$\pm$0.00  & 0.57$\pm$0.01  & 0.40$\pm$0.03  \\
    Deep ensemble & 0.957  & 0.71  & 0.74  & 0.61  \\
    Proj.\@ (ours) & 0.958$\pm$0.004  & \first{0.81$\pm$0.03}  & \first{0.75$\pm$0.01}  & \first{0.66$\pm$0.01}  \\
    Loss-proj.\@ (ours) & 0.947$\pm$0.007  & 0.76$\pm$0.03  & 0.72$\pm$0.01  & 0.62$\pm$0.01  \\
    Proj-Lanczos & 0.954$\pm$0.004  & 0.80$\pm$0.03  & 0.74$\pm$0.01  & 0.64$\pm$0.01  \\
    Local ensemble & 0.951$\pm$0.003  & 0.79$\pm$0.04  & 0.73$\pm$0.01  & 0.63$\pm$0.01  \\
    LLA & 0.954$\pm$0.00  & 0.80$\pm$0.03  & 0.74$\pm$0.01  & 0.64$\pm$0.01  \\
    LLA-diag & 0.797$\pm$0.021  & 0.54$\pm$0.03  & 0.57$\pm$0.02  & 0.45$\pm$0.03  \\
    SCOD & \first{0.964$\pm$0.000}  & 0.79$\pm$0.03  & 0.73$\pm$0.01  & 0.64$\pm$0.01  \\
    SWAG & 0.740$\pm$0.021  & 0.70$\pm$0.08  & 0.52$\pm$0.04  & 0.49$\pm$0.04 \\
    SLU & 0.953$\pm$0.003  & 0.80$\pm$0.03  & 0.74$\pm$0.01  & 0.64$\pm$0.01 \\
    \bottomrule
    
    \end{tabular}
    }
\end{table}

\subsection{Generative models}




\begin{mdframed}[backgroundcolor=myblue, linecolor=blue!75!black]
\textbf{Hypothesis:} The loss-projected posterior scales to generative models with large output dimensions.
\end{mdframed}

We evaluate the loss-projected posterior on variational autoencoders \citep{kingma2013auto} trained on \textsc{MNIST} and \textsc{Fashion MNIST}, showcasing its flexibility with high-dimensional outputs and diverse tasks. After training, we sample decoder parameters to generate images from latent space samples. Fig.~\ref{fig:VAE variance} shows that  the sampled decoders produce diverse reconstructions with pixel-wise uncertainty estimates that capture meaningful features.\looseness=-1

\emph{This experiment highlights that our approach scales to generative models while preserving essential predictive uncertainties.} Its adaptability beyond classification makes it a robust tool for diverse tasks.

\section{Summary and limitations}

\textbf{Theoretically} we show that our proposed \emph{projected posterior} is optimal w.r.t.\@ underfitting for noiseless data. We further show that existing Laplace approximations do not meet this fundamental requirement. However, the applicability to high-noise data remains unresolved. 


\textbf{Algorithmically} we introduce a memory-efficient algorithm for computing projection-vector-products for a Jacobian kernel. This approach is based on the novel idea of representing the kernel as an intersection of kernels, which enables the use of alternating projections.


\textbf{Empirically} our methods outperform existing baselines in out-of-distribution detection and in-distribution calibration in diverse settings, ranging from toy regression problems to vision transformers on ImageNet.

\textbf{Extending the method} can be done in several ways. We merely project an isotropic Gaussian to isolate the strength of the projection itself, but we stress that the algorithm lets us project samples from any distribution. For example, it is straightforward to project samples from last-layer Laplace into the kernel. We emphasize that our method is very general as it works for any differentiable model, unlike e.g.\@ \textsc{kfac} \citep{dangel2020backpack} which requires network-specific implementations. Our open-source implementation is applicable whenever we can access Jacobian-vector and vector-Jacobian products.\looseness=-1






\subsection*{Acknowledgements}
This work was supported by a research grant (42062) from VILLUM FONDEN.
This work was partly funded by the Novo Nordisk Foundation through the Center for Basic Research in Life Science (NNF20OC0062606).
This project received funding from the European Research Council (ERC) under the European Union's Horizon Programme (grant agreement 101125003).

\bibliography{iclr2025_conference}
\bibliographystyle{abbrvnat}

\appendix
\onecolumn
\newpage

\section{Proofs}

\subsection{Proof of positive variance lemma}
\label{sec:proof_lemma_positive_variance}
In this section, we prove \Cref{lm:variance_nonzero_nontrainset}. We restate it for convenience.

Let \mbox{$\J_{\btheta} \!=\! [\J_{\btheta}(\x_1)\T \ldots \J_{\btheta}(\x_N)\T]\T$ and} $\x_{\textnormal{test}} \in \R^I$, then
\begin{equation*}
  \begin{gathered}
     \textnormal{\textsc{Var}}_{\btheta \sim q_{\textnormal{\textsc{proj}}}} f_{\textnormal{lin}}^{\btheta_{\MAP}}(\btheta, \x_{\textnormal{test}}) > 0 \\
    \textnormal{\textsc{Rank}}\!\left(\begin{smallpmatrix}
    \J_{\btheta_{\textsc{map}}} \\ \J_{\btheta_{\textsc{map}}}(\x_{\textnormal{test}})
\end{smallpmatrix} \begin{smallpmatrix}
    \J_{\btheta_{\textsc{map}}} \\  \J_{\btheta_{\textsc{map}}}(\x_{\textnormal{test}})
\end{smallpmatrix}\T\right)
>
\textnormal{\textsc{Rank}}\left(\J_{\btheta_{\textsc{map}}} \J_{\btheta_{\textsc{map}}}\T\right).
  \end{gathered}
\end{equation*}

\begin{proof}
It is easier to show the contrapositive i.e.\@ to show that if the variance is 0, then the square matrix has rank at most $NO$. 

Let's first define a more compact notation $\J_D = \J_{\btheta_{\textsc{map}}}$ and $\J_T=\J_{\btheta_{\textsc{map}}}(\x_{test})$ for the two Jacobians. Let $D=\textnormal{\textsc{Rank}}(\J_T)$ and $T=\textnormal{\textsc{Rank}}(\J_T$). It holds that $D\leq NO$ and $T\leq O$ since the rank is always upper bound by the number of rows. Consider their singular values decompositions
\begin{equation}
    J_D = \sum_{i=1}^{D} \sigma_i v_i V_i^\top
    \qquad
    J_T = \sum_{j=1}^{T} \gamma_j w_j W_j^\top,
\end{equation}
where $\sigma_i,\gamma_j>0$, $v_i\in\mathbb{R}^{D}$, $V_i\in\mathbb{R}^{P}$, $w_j\in\mathbb{R}^{T}$, $ W_j\in\mathbb{R}^{P}$, $v_i^\top v_{i'}=V_i^\top V_{i'}=\delta_{i i'}$ and $w_j^\top W_{j'}=W_j^\top W_{j'}=\delta_{j j'}$ for any $i,i'=1,\ldots,D$ and $j,j'=1,\ldots,T$. Moreover let $V_{D+1},\ldots,V_P\in\mathbb{R}^P$ be a orthonormal completion of $V_1,\ldots,V_{D}$ as a basis.
\begin{align}
    \textnormal{\textsc{Var}}_{\btheta \sim q} f_{\textnormal{lin}}^{\btheta_{\MAP}}(\btheta, \x_{test})
    & =
    \textnormal{\textsc{Tr}}
    \left(
        \sum_{j,j'=1}^{T}
        \gamma_j w_j W_j^\top
        \overbrace{
        \left(
            \sum_{i=D+1}^P
            V_i V_i^\top
        \right)}^{\mathbb{I}-\mathcal{P}(\textsc{ggn})}
        \gamma_{j'} W_{j'} w_{j'}^\top
    \right) \\
    & =
    \sum_{i=D+1}^P
    \sum_{j=1}^{T}
    \gamma_j^2 W_j^\top
    V_i V_i^\top
    W_{j} \\
    & =
    \sum_{i=D+1}^P
    \sum_{j,=1}^{T}
    \gamma_j^2
    (W_j^\top V_i)^2
\end{align}
and a sum of positive elements being 0 implies that every element is 0, thus
\begin{equation}
    \textnormal{\textsc{Var}}_{\btheta \sim q} f_{\textnormal{lin}}^{\btheta_{\MAP}}(\btheta, \x_{test}) = 0
    \quad\Longrightarrow\quad
    W_j \perp V_i 
    \quad \forall j=1,\ldots T
    \quad \forall i=D+1,\ldots,P
\end{equation}
and since $V_1,\ldots,V_P$ is a basis, this implies that there exists some coefficients $\beta^{(j)}_k$ such that $W_j = \sum_{k=1}^{D} \beta^{(j)}_k V_k$. Consequently
\begin{equation}
\label{eq:blblbl_some_random_label}
    W_j W_j^\top
    =
    \sum_{k,k'=1}^{D} \beta^{(j)}_k V_k \beta^{(j)}_{k'} V_{k'}^\top
    =
    \sum_{k=1}^{D} (\beta^{(j)}_k)^2 V_k V_{k}^\top
    \quad \forall j=1,\ldots O
\end{equation}
Now consider the matrix
\begin{align}
    \begin{pmatrix}
        \J_D \\ \J_T
    \end{pmatrix}^\top
    \begin{pmatrix}
        \J_D \\ \J_T
    \end{pmatrix} 
    & =
    \J_D^\top\J_D
    +
    \J_T^\top\J_T \\
    & =
    \sum_{i=1}^D \sigma_i V_i V_i^\top
    +
    \sum_{j=1}^T \gamma_j W_j W_j^\top \qquad\quad (\cref{eq:blblbl_some_random_label})\\
    & =
    \sum_{i=1}^D \sigma_i V_i V_i^\top
    +
    \sum_{j=1}^T \gamma_j \sum_{k=1}^{D} (\beta^{(j)}_k)^2 V_k V_{k}^\top \\
    & =
    \sum_{i=1}^D \sigma_i V_i V_i^\top
    +
    \sum_{k=1}^{D} \left( \sum_{j=1}^T \gamma_j  (\beta^{(j)}_k)^2 \right) V_k V_{k}^\top  \\
    & =
    \sum_{i=1}^D 
    \left( 
        \sigma_i 
        +
        \sum_{j=1}^T \gamma_j  (\beta^{(j)}_i)^2
    \right)    
    V_i V_i^\top
\end{align}
which has rank equal to $D$. Then finally
\begin{equation}
    \textnormal{\textsc{Rank}}
    \left(
    \begin{pmatrix}
        \J_D \\ \J_T
    \end{pmatrix}
    \begin{pmatrix}
        \J_D \\ \J_T
    \end{pmatrix} ^\top
    \right)
    =
    \textnormal{\textsc{Rank}}
    \left(
    \begin{pmatrix}
        \J_D \\ \J_T
    \end{pmatrix}^\top
    \begin{pmatrix}
        \J_D \\ \J_T
    \end{pmatrix} 
    \right)
    =
    D \leq NO
\end{equation}
which concludes the proof.
\end{proof}

\subsection{Proof of the upper bound on Linearized Laplace predictive variance}
\label{sec:proof_upper_bound_lla}
In this section, we prove \cref{thm:upper_bound_lla}. We restate it below for convenience
\UpperBoundLinearizedLaplace*

\begin{proof}
Let $\J_{\btheta_{\textsc{map}}}=\sum_{i=1}^{NO}\sigma_i w_i^\top v_i$ be a SVD decomposition of the full dataset Jacobian. Namely, $w_i\in\mathbb{R}^{NO}$, $v_i\in\mathbb{R}^P$ for any $i$, and $w_i^\top w_i=\delta_{ij}$, $v_i^\top v_i=\delta_{ij}$ for any $i, j$. Then it holds
\begin{equation}
    (\textnormal{\textsc{ggn}} + \alpha \mathbb{I})^{-1}
    =
    \alpha^{-1} \mathbb{I} +
    \sum_{i=1}^{NO} 
    \left(\frac{1}{\sigma_i^2 +\alpha} - \frac{1}{\alpha}\right) v_i^\top v_i
\end{equation}
and consequently
\begin{align}
    \J_{\btheta_{\textsc{map}}}
    (\textnormal{\textsc{ggn}} + \alpha \mathbb{I})^{-1}
    \J_{\btheta_{\textsc{map}}}^\top
    & =
    \sum_{j,k=1}^{NO}
    \sigma_j w_j^\top v_j
    \left(
        \alpha^{-1} \mathbb{I} +
        \sum_{i=1}^{NO} 
        \left(\frac{1}{\sigma_i^2 +\alpha} - \frac{1}{\alpha}\right) v_i^\top v_i
    \right)
    \sigma_k v_k^\top w_k \\
    & =
    \sum_{i=1}^{NO}
    \frac{\sigma_i^2}{\alpha}
    w_i^\top w_i
    +
    \sum_{i=1}^{NO}
    \sigma_i^2
    \left(\frac{1}{\sigma_i^2 +\alpha} - \frac{1}{\alpha}\right)
    w_i^\top w_i \\
    & =
    \sum_{i=1}^{NO}
    \frac{\sigma_i^2}{\sigma_i^2 +\alpha}
    w_i^\top w_i
\end{align}
Now note that each datapoint Jacobian can be written in terms of the full dataset Jacobian as $\J_{\btheta_{\textsc{map}}}(\x_n) = \sum_{i=1}^{O} f_i^\top e_{(n-1)O+i} \J_{\btheta_{\textsc{map}}}$ where $\{f_i\}_{i=1\ldots O}$ and $\{e_i\}_{i=1\ldots NO}$ are the canonical bases of $\mathbb{R}^O$ and $\mathbb{R}^{NO}$, respectively.

We can finally express the Linearized Laplace predictive variance, for any train datapoint $\x_n$, as 
\begin{align}
\label{eq:lla_predictive_variance_expanded}
    \textnormal{\textsc{Var}}_{\theta\sim \mathcal{N}(\btheta_{\MAP}, (\textnormal{\textsc{ggn}} + \alpha \mathbb{I})^{-1})} 
    & f_{\text{lin}}^{\btheta_{\MAP}}(\btheta, \x_n)
    =
    \textnormal{\textsc{Tr}}
    \left(
    \J_{\btheta_{\textsc{map}}}(\x_n)
    (\textnormal{\textsc{ggn}} + \alpha \mathbb{I})^{-1}
    \J_{\btheta_{\textsc{map}}}(\x_n)^\top
    \right) \\ \nonumber
    = &
    \textnormal{\textsc{Tr}}
    \left(
    \sum_{i,j=1}^{O} f_i^\top e_{(n-1)O+i} 
    \J_{\btheta_{\textsc{map}}}(\x_n)
    (\textnormal{\textsc{ggn}} + \alpha \mathbb{I})^{-1}
    \J_{\btheta_{\textsc{map}}}^\top
    e_{(n-1)O+j}^\top f_j
    \right) \\ \nonumber
    = &
    \sum_{i=1}^{O}
    e_{(n-1)O+i} \J_{\btheta_{\textsc{map}}}
    (\textnormal{\textsc{ggn}} + \alpha \mathbb{I})^{-1}
    \J_{\btheta_{\textsc{map}}}^\top
    e_{(n-1)O+i}^\top \\ \nonumber
    = & 
    \sum_{i=1}^{O}
    \sum_{j=1}^{NO}
    \frac{\sigma_j^2}{\sigma_j^2 +\alpha}
    e_{(n-1)O+i} 
    w_j^\top w_j 
    e_{(n-1)O+i}^\top 
\end{align}
Now the \emph{upper bound} follows by noting that for all $j$
\begin{equation}
    \frac{\sigma_j^2}{\sigma_j^2 +\alpha} \leq 
    \max_k 
    \frac{\sigma_k^2}{\sigma_k^2 +\alpha}
\end{equation}
and thus from \cref{eq:lla_predictive_variance_expanded}, noting that $\sum_{j=1}^{NO}w_j^\top w_j =\mathbb{I}_{NO}$ we have for any train datapoint $\x_n$
\begin{align}
    \textnormal{\textsc{Var}}_{\theta\sim \mathcal{N}(\btheta_{\MAP}, (\textnormal{\textsc{ggn}} + \alpha \mathbb{I})^{-1})} 
    f_{\text{lin}}^{\btheta_{\MAP}}(\btheta, \x_n)
    & =
    \sum_{i=1}^{O}
    \sum_{j=1}^{NO}
    \frac{\sigma_j^2}{\sigma_j^2 +\alpha}
    e_{(n-1)O+i} 
    w_j^\top w_j 
    e_{(n-1)O+i}^\top \\ \nonumber
    & \leq  
    \max_k \frac{\sigma_k^2}{\sigma_k^2 +\alpha}
    \sum_{i=1}^{O}
    \sum_{j=1}^{NO}
    e_{(n-1)O+i} 
    w_j^\top w_j 
    e_{(n-1)O+i}^\top\\ \nonumber
    & = 
    \max_k \frac{\sigma_k^2}{\sigma_k^2 +\alpha}
    \sum_{i=1}^{O}
    e_{(n-1)O+i} 
    e_{(n-1)O+i}^\top
    = 
    O \max_k \frac{\sigma_k^2}{\sigma_k^2 +\alpha}
\end{align}
While the \emph{lower bound}, similarly, follows by noting that for all $j$
\begin{equation}
    \frac{\sigma_j^2}{\sigma_j^2 +\alpha} \geq 
    \min_k 
    \frac{\sigma_k^2}{\sigma_k^2 +\alpha}
\end{equation} 
and thus from \cref{eq:lla_predictive_variance_expanded} we have for any train datapoint $\x_n$
\begin{align}
    \textnormal{\textsc{Var}}_{\theta\sim \mathcal{N}(\btheta_{\MAP}, (\textnormal{\textsc{ggn}} + \alpha \mathbb{I})^{-1})} 
    f_{\text{lin}}^{\btheta_{\MAP}}(\btheta, \x_n)
    & =
    \sum_{i=1}^{O}
    \sum_{j=1}^{NO}
    \frac{\sigma_j^2}{\sigma_j^2 +\alpha}
    e_{(n-1)O+i} 
    w_j^\top w_j 
    e_{(n-1)O+i}^\top \\ \nonumber
    & \geq  
    \min_k \frac{\sigma_k^2}{\sigma_k^2 +\alpha}
    \sum_{i=1}^{O}
    \sum_{j=1}^{NO}
    e_{(n-1)O+i} 
    w_j^\top w_j 
    e_{(n-1)O+i}^\top
    = 
    O \min_k \frac{\sigma_k^2}{\sigma_k^2 +\alpha}
\end{align}
which concludes the proof by noting that the function $\sigma\mapsto\frac{\sigma}{\sigma+\alpha}$ is monotonic for any $\alpha>0$, and that the eigenvalues of the \textsc{ggn} are a simple function of the singular values of the Jacobian: $\lambda_{max}(\textnormal{\textsc{ggn}})=\max_k \sigma_k^2$ and $\lambda_{min}^{\not=0}(\textnormal{\textsc{ggn}})=\min_k \sigma_k^2$. We emphasize that the singular values of the Jacobian affect the non-zero eigenvalues of the \textsc{ggn}, thus the minimum singular value corresponds to the minimum-non-zero eigenvalue.
\end{proof}

\subsection{Proof of Lemma 3.4}
\begin{proof}
    
It follows from equation \ref{eq: lml} the approximate log marginal likelihood is given by:
\begin{equation}
    \log \, q_{\textsc{lla}}(\mathcal{D}|\alpha) 
     = 
      \log\,p(\mathcal{D}|\btheta_{\MAP}) + \log\, p(\btheta_{\MAP}|\alpha) 
      + \frac{1}{2} \log\det\left( \frac{1}{2\pi}\left(\textsc{ggn} + \alpha \mathbb{I}\right)^{-1} \right)
\end{equation}
We can approximate $(\textsc{ggn} + \alpha \mathbb{I})^{-1}$  with $\alpha^{-1}(\mathbb{I}_P - \mathcal{P}\left(\textnormal{\textsc{ggn}}_{\btheta_{\textsc{map}}}\right))$(cf. lemma \ref{lm:error_bound}). Hence log-determinant of $(\textsc{ggn} + \alpha \mathbb{I})^{-1}$ can also be approximated by sum of log-eigenvalues $\alpha^{-1}(\mathbb{I}_P - \mathcal{P}\left(\textnormal{\textsc{ggn}}_{\btheta_{\textsc{map}}}\right))$. 
\begin{align}
    \log \, q_{\textsc{lla}}(\mathcal{D}|\alpha) 
     &= 
      \log\,p(\mathcal{D}|\btheta_{\MAP}) + \log\, p(\btheta_{\MAP}|\alpha) 
      + \frac{1}{2} \log\det\left( \frac{1}{2\pi}\left(\textsc{ggn} + \alpha \mathbb{I}\right)^{-1} \right) \\
      &\approx  - \frac{1}{2} \alpha \lVert \btheta_{\MAP} \rVert^2 + \frac{P - Tr(\mathbb{I}_P - \mathcal{P}\left(\textnormal{\textsc{ggn}}_{\btheta_{\textsc{map}}}\right))}{2} \log(\alpha)  +  C
\end{align}
where $C$ denotes all the terms that don't depend on $\alpha$. Taking the derivative wrt $\alpha$ of the above equation and setting it to zero gives us the stationary points.
\begin{align}
    \frac{d \log \, q_{\textsc{lla}}(\mathcal{D}|\alpha)}{d \alpha} &\approx -  \frac{1}{2} \lVert \btheta_{\MAP} \rVert^2 + \frac{P - Tr(\mathbb{I}_P - \mathcal{P}\left(\textnormal{\textsc{ggn}}_{\btheta_{\textsc{map}}} \right))}{2} \frac{1}{\alpha} = 0 \\
    &\implies \alpha^* = \frac{\norm{\btheta_{\MAP}}^2}{P - \trace(\mathbb{I}_P - \mathcal{P}\left(\textnormal{\textsc{ggn}}_{\btheta_{\MAP}}\right))}
\end{align}

To conclude the proof one only needs to notice that the second derivative of $\log \, q_{\textsc{lla}}(\mathcal{D}|\alpha)$ is given by $-\frac{P - Tr(\mathbb{I}_P - \mathcal{P}\left(\textnormal{\textsc{ggn}}_{\btheta_{\textsc{map}}} \right))}{2} \frac{1}{\alpha^2}$ which is always negative. Hence the stationary point is the maximum value of the approximate log marginal likelihood. 
\end{proof}

\subsection{Proof of Error Bounds in Lemma 3.5}
\begin{proof}
    
To prove the bound on the matrix norm of the difference between the covariances of Laplace's approximation and projection posterior, notice that
\begin{equation*}
    (\alpha \mathbb{I}_P + \textnormal{\textsc{ggn}})^{-1} = \alpha^{-1}(\mathbb{I}_P - \mathcal{P}\left(\textnormal{\textsc{ggn}}_{\btheta_{\textsc{map}}}\right) + V (\Lambda + \alpha)^{-1} V^T
\end{equation*}
Where $\Lambda$ and $V$ correspond to non-zero eigenvalues and eigenvectors of \textsc{ggn} respectively. Therefore from the properties of the spectral norm, we have that:

\begin{align*}
    \Big\|
    (\alpha \mathbb{I}_P + \textnormal{\textsc{ggn}})^{-1} 
    -
    \alpha^{-1}(\mathbb{I}_P - \mathcal{P}\left(\textnormal{\textsc{ggn}}_{\btheta_{\textsc{map}}}\right)
    \Big\| &= \Big\| V (\Lambda + \alpha)^{-1} V^T \Big\| \\
    &\leq \frac{1}{\tau + \alpha}
\end{align*}

This proves the first bound. To prove the bound on Wasserstein distance note that the Wasserstein distance between two Gaussian, $\mathcal{N}(\mu_1, \Sigma_1)$ and $\mathcal{N}(\mu_2, \Sigma_2)$, is given by: 
\begin{equation*}
    d^2 =  \Big\| \mu_1 - \mu_2 \Big\|^2 + Tr(\Sigma_1 + \Sigma_2 - 2(\Sigma_2^{\frac{1}{2}} \Sigma_1 \Sigma_2^{\frac{1}{2}})^{\frac{1}{2}})
\end{equation*}
We plug in $\mu_1 = \mu_1 = \btheta_{\textsc{map}}$, $\Sigma_1 = (\alpha \mathbb{I}_P + \textnormal{\textsc{ggn}})^{-1} $ and $\Sigma_2 = \alpha^{-1}(\mathbb{I}_P - \mathcal{P}\left(\textnormal{\textsc{ggn}}_{\btheta_{\textsc{map}}}\right))$. Also notice that by the properties of projection matrices, it follows that  $\Sigma_2^{\frac{1}{2}} \Sigma_1 \Sigma_2^{\frac{1}{2}} =  \frac{1}{\alpha^2} (\mathbb{I}_P - \mathcal{P}\left(\textnormal{\textsc{ggn}}_{\btheta_{\textsc{map}}}\right))$. Hence we have that
\begin{align*}
    d^2 &= Tr\left((\alpha \mathbb{I}_P + \textnormal{\textsc{ggn}})^{-1} -  \alpha^{-1}(\mathbb{I}_P - \mathcal{P}\left(\textnormal{\textsc{ggn}}_{\btheta_{\textsc{map}}}\right)\right) \\
    &= Tr(V (\Lambda + \alpha)^{-1} V^T ) \\
    &\leq \frac{k}{\tau + \alpha}
\end{align*}
\end{proof}

\subsection{Proof of equation \ref{eq: sum_ker}}
\begin{proof}
    To prove that $\mathrm{ker}\left(\sum_b \M_b\T \M_b\right) = \bigcap_b \mathrm{ker}(\M_b\T \M_b)$, assume that $v\in \mathrm{ker}\left(\sum_b \M_b\T \M_b\right)$ then it follows that:
    \begin{align*}
        &\left(\sum_b \M_b\T \M_b\right)v = 0 \\
        &\implies v^T \left(\sum_b \M_b\T \M_b\right)v = 0 \\
        &\implies\sum_b \underbrace{v^T \M_b\T \M_b v}_{\geq 0} = 0 \\
        &\implies v^T \M_b\T \M_b v = 0 \quad \forall b
    \end{align*}
Hence, we can conclude that $v \in \bigcap_b \mathrm{ker}(\M_b\T \M_b)$. On the other hand, it is obvious that if $v \in \bigcap_b \mathrm{ker}(\M_b\T \M_b)$ we have that 
\begin{align*}
        & \M_b\T \M_b v = 0 \quad \forall b \\
        &\implies\sum_b \M_b\T \M_b v = 0 \\
        &\implies v \in \mathrm{ker}\left(\sum_b \M_b\T \M_b\right)
    \end{align*}
This proves that $\mathrm{ker}\left(\sum_b \M_b\T \M_b\right) = \bigcap_b \mathrm{ker}(\M_b\T \M_b)$.
\end{proof}
\subsection{Discussion of Lemma 4.1}
The Convergence of Alternating Projections for two subspaces first appeared in John Von Neuman's Lecture Notes on Operator Theory\citep{Neumann1949OnRO}. It was extended to the case of infinite convex sets in \citet{bregman1965finding}. Rate of convergence was analyzed in several works such as \citet{kayalar1988error}, \cite{smith1977practical}. This gives us some understanding of the approximation error incurred by truncating the interactive algorithm after $t$ steps.

\subsection{Proof of Lemma 4.2}
\begin{proof}
    Suppose $v \in \mathrm{ker}(\J_{\btheta})$. Then we have that $\J_{\btheta}v = 0$. Note that $\J^L_{\btheta} = \nabla_{f(\btheta,\x)}l(f(\btheta,\x),\y)  \J_{\btheta}(\x)$. Hence, we have that $ \J^L_{\btheta}v = \nabla_{f(\btheta,\x)}l(f(\btheta,\x),\y)  (\J_{\btheta}(\x)v) = \nabla_{f(\btheta,\x)}l(f(\btheta,\x),\y) \bm{0} = \bm{0}$. Thus we have that $v \in \mathrm{ker}(\J^L_{\btheta})$. Therefore, $\mathrm{ker}(\J_{\btheta}) \subseteq \mathrm{ker}(\J^L_{\btheta})$. 
\end{proof}

\subsection{Proof of Lemma 4.3}
In this section, we prove \Cref{lm:loss_proj_err}. We restate it for convenience.
\BoundedLossKernelError*

\begin{proof}
    Consider the first order Taylor expansion of $f$ around $\btheta_{\textsc{map}}$
    \begin{equation}
        f(\btheta,\x) = 
        f(\btheta_{\textsc{map}},\x)
        +
        \J_{\btheta_{\textsc{map}}}(\btheta-\btheta_{\textsc{map}})
        +
        \mathcal{O}(\|\btheta - \btheta_{\textsc{map}}\|^2)
    \end{equation}
    and the first order Taylor expansion of $l$ around $f(\btheta_{\textsc{map}},\x)$, which we shorten as $f_{\textsc{map}}$
    \begin{equation}
        l(f(\btheta,\x),\y)
        =
        l(f_{\textsc{map}},\y)
        +
        \nabla_{f_{\textsc{map}}}l(f_{\textsc{map}},\y) 
        (l(f(\btheta,\x),\y) - l(f_{\textsc{map}},\y))
        +
        \mathcal{O}(\|l(f(\btheta,\x),\y) - l(f_{\textsc{map}},\y) \|^2)
    \end{equation}
    And we can use these to write
    \begin{align}
    \label{eq:taylor_expanded_difference_loss_function_proof_lemma}
         l(f(\btheta,\x),\y) - l(f_{\textsc{map}},\y)
         & = 
         \nabla_{f_{\textsc{map}}}l(f_{\textsc{map}},\y) \J_{\btheta_{\textsc{map}}}(\btheta-\btheta_{\textsc{map}})
         +
         \mathcal{O}(\|\btheta - \btheta_{\textsc{map}}\|^2) \\
         & = 
         \nabla_{\btheta_{\textsc{map}}}l(f(\btheta_{\textsc{map}},\x),\y)
         (\btheta-\btheta_{\textsc{map}})
         +
         \mathcal{O}(\|\btheta - \btheta_{\textsc{map}}\|^2)
    \end{align}
    where we collected all the second order terms in $\btheta$ in the $\mathcal{O}$ term. \\
    Then, for any $(\x_n,\y_n)\in\mathcal{D}$ and for any $\btheta\sim q_{\textsc{loss}}$, by definition of the loss kernel in \cref{eq:approx_post_loss}, it holds that
    \begin{equation}
        \nabla_{\btheta_{\textsc{map}}} l(f(\btheta_{\textsc{map}},\x_n),\y_n)
        (\btheta-\btheta_{\textsc{map}})
        = 0
    \end{equation}
    which we plug back into \cref{eq:taylor_expanded_difference_loss_function_proof_lemma} and we get
    \begin{equation}
        l(f(\btheta,\x_n),\y_n) - l(f(\btheta_{\textsc{map}},\x_n),\y_n)
        =
        \mathcal{O}(\|\btheta - \btheta_{\textsc{map}}\|^2)
        \qquad
        \textnormal{ for any }
        \btheta\sim q_{\textsc{loss}}
    \end{equation}
    and the first part of the Lemma is proved. The variance bound directly follows since the norm $\|\btheta - \btheta_{\textsc{map}}\|$ is controlled by the precision scale $\alpha$ of $q_{\textsc{loss}}$.
\end{proof}

\begin{figure*}[b!]
  \centering
  \includegraphics[width=\linewidth]{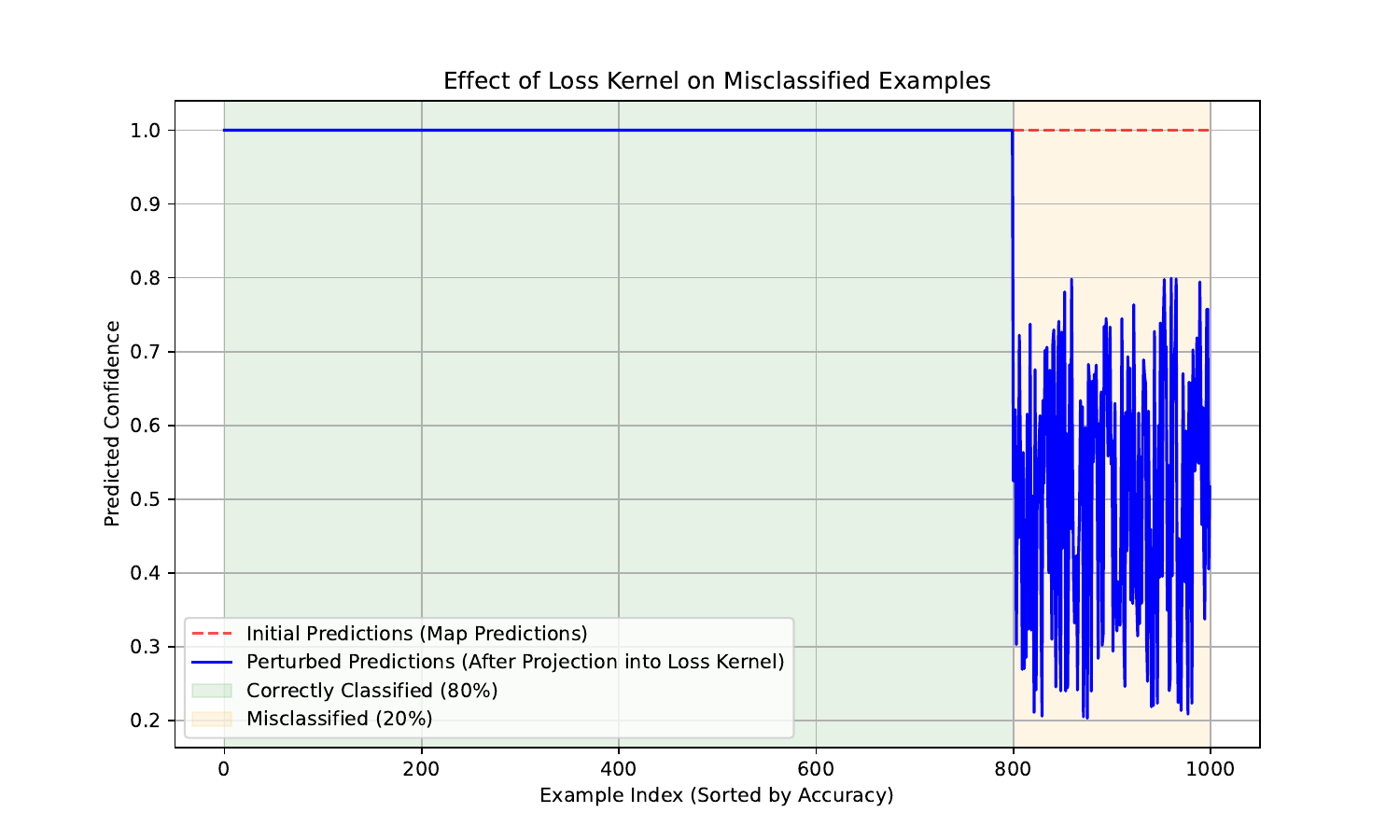}
  \vspace{-8mm}
  \caption{The loss-projected posterior perturb the predicted probabilities of all the classes except the true label. This leads to a lower confidence in misclassified examples, leading to better calibration.}
  \label{fig:calib_img}
\end{figure*}

\section{Calibration of loss kernel projection}
\label{appdx: calib}

To demonstrate how the poss-projected posterior improves calibration, consider a simple example of a classification task with $C$ output classes. Suppose the maximum a posteriori (\textsc{map}) estimate is overly confident in its predictions, consistently assigning a probability of $1.0$ to the predicted class and $0.0$ to the remaining $C - 1$ classes. For simplicity, assume this \textsc{map} estimate correctly classifies $80\%$ of the examples while misclassifying the remaining $20\%$.

In this scenario, the \textsc{map} estimate’s confidence—represented by the maximum output probability is uniformly $1.0$ for both correctly classified and misclassified examples. This results in overconfident predictions for the $20\%$ of cases where the model is wrong.

Now, consider using the loss-projected posterior for predictions. This posterior is designed to preserve the model's accuracy by maintaining the probability assigned to the true label, especially for correctly classified examples. Thus, for the $80\%$ of cases where the \textsc{map} is correct, both accuracy and confidence remain largely unchanged.

However, for the $20\%$ of misclassified examples, the loss-projected posterior adjusts the predicted probabilities. Specifically, it reduces overconfidence by perturbing the probabilities of all classes except the true label. By redistributing some of the confidence away from the incorrect class, the model achieves better calibration. Misclassified examples no longer exhibit extreme certainty, thus reducing the calibration error and yielding predictions that are more aligned with the model’s actual performance.

This behavior is illustrated in Figure~\ref{fig:calib_img}, which shows how confidence is moderated on misclassified examples, leading to an overall improvement in the calibration of the model.


\section{Implementation details and experimental setup}
\label{appdx:expt_details}

In this section, we outline the specifics of the experimental setup and provide key hyperparameter details. Any additional information about the experimental setup can be found in the submitted source code.

\subsection{Motivation for the choice of baselines}
We benchmark our method against several baselines, including \textsc{map}, Diagonal Laplace, Last-Layer Laplace, and SWAG. The motivation behind these choices is to compare our approach with other post-hoc methods that approximate a Gaussian posterior centered at the same mode. 

We chose \textsc{map} as a baseline because the maximum softmax probability is a strong baseline for out-of-distribution (OOD) detection.

The last-layer Laplace approximation was selected as a baseline following recommendations from \citep{daxberger2021laplace}. This method is considered a strong representative of Laplace approximations and is expected to provide near-optimal performance across various configurations of Laplace approximations.

We include SWAG as another baseline, which provides a simple yet effective method for uncertainty quantification. SWAG builds a Gaussian approximation close to the \textsc{map} but with a low-rank covariance matrix which is different from Laplace approximations. 

Finally, diagonal Laplace is included to assess the impact of posterior correlations. This method simplifies the full Laplace by using only diagonal covariance, providing insight into the difference between sparsely correlated and fully correlated posteriors.

We exclude KFAC-Laplace from the benchmarks because it requires specialized layer-specific implementations, which are not readily available for modern architectures like SWIN transformers.

\subsection{Toy regression}
We train a two-layer MLP with 10 hidden units per layer on a simple regression task. For uncertainty estimation, we sample from approximate posteriors of various methods, including projected posterior, loss-projected posterior, linearized Laplace, sampled Laplace, and diagonal Laplace. 

While linearized Laplace and sampled Laplace refer to the same Gaussian distribution in parameter space, they differ in how predictions are generated. Linearized Laplace uses the linearized neural network for predictions and sampled Laplace uses neural network for predictions. The diagonal Laplace method, on the other hand, approximates the covariance of this Gaussian by only considering its diagonal, which leads to sparsely correlated posteriors. For a consistent and fair comparison, all methods utilize a prior precision of $1.0$.

\subsection{Image classification on MNIST and FMNIST}

We train a standard LeNet for the MNIST and FashionMNIST experiments. We train LeNet with Adam optimizer and a $10^{-3}$ learning rate. We choose the prior precision for each baseline by doing a grid search over $\{0.1, 1.0, 5.0, 10.0, 50.0, 100.0 \}$. For Projected Posterior and Loss Projected Posteriors, we use the analytical expression for the optimal prior precision and we do $1000$ iterations of alternating projections for both and a projection batch size, $S = 16$. For all Bayesian methods, we use $30$ Monte Carlo samples for predictions. Whereas for SWAG we use a learning rate of $10^{-2}$ with momentum of $0.9$ and weight decay of $3\times10^{-4}$ and the low-rank covariance structure in all experiments. We collect $20$ models to sample from the posterior. For Projected Posterior and Loss Projected Posterior, we use the linearized predictive and for Last-Layer and Diagonal baselines we use the neural network predictive. 

\subsection{Image classification on CIFAR-10 and SVHN}
We train a ResNet architecture consisting of three groups of three ResNet blocks. The model is trained using Stochastic Gradient Descent (SGD) with a learning rate of $0.1$, momentum, and weight decay. The prior precision is chosen in the same way as the experiment above, and the same predictive functions are applied.

For the projected posteriors, we perform 1000 iterations of alternating projections with a projection batch size of $S = 16$. All Bayesian methods utilize $30$ Monte Carlo samples to compute predictions.

For SWAG on CIFAR-10, we use a learning rate of $10^{-2}$ with momentum of $0.9$ and weight decay of $3\times 10^{-4}$ and the low-rank covariance structure in all experiments. We collect 20 models to sample from the posterior.

\subsection{Image binary multi-classification on CelebA}
We first remove from the training dataset all the images belonging to the classes 'Bald', 'Eyeglasses', and 'Mustache'. Then we train a VisualAttentionNetwork \citep{guo2023visual} with blocks of depths $(3, 3, 5, 2)$ and embedded dimensions of $(32, 64, 160, 256)$ with relu activation functions, we trained with Adam for 50 epochs with batch size 128 and a learning rate decreasing from $10^{-3}$ to $10^{-5}$; parameter size is $P=3858309$. All experiments are run on a single H100 GPU.

We compare the scores of a series of baselines including Max Logit \citep{hendrycks2016baseline} and a Deep Ensemble (DE) \citep{Lakshminarayanan2016deepensembles} of 10 independently trained models, the high training cost is the reason why DE is missing standard deviations, since that would require training 30 models. We included several low-rank-approximation methods: Linearized Laplace Approximation (LLA) \citep{immer2021improving}, Local Ensemble (LE) \citep{Madras2020Detecting}, Stochastic Weight Averaging Gaussian (SWAG) \citep{maddox2019simple}, Sketching Curvature for OoD Detection (SCOD) \citep{sharma2021sketching} and Sketched Lanczos Uncertainty (SLU) \citep{miani:sketching:2024}. All of these but SLU use a rank 300 approximation for memory limit, while SLU uses a rank 1000 and a sketch size of 1M. All of these but SCOD and SWAG are based on Lanczos algorithm, thus we also computed our Projected score using Lanczos algorithm to compute the top eigenvectors, this is referred to as 'Proj-Lanczos' in \cref{tab:results_celeba}. Lastly, we also include the diagonal version of Laplace (LLA-d) which is a common method used for large models thanks to its low memory requirement.

\subsection{Image classification on ImageNet}
We obtain a pre-trained SWIN transformer\citep{liu2021swin}, with 28 million parameters, from \citep{jax-models}. We sample from the Loss Projected Posterior. We use $5$ Monte Carlo samples for predictions and do $15$ iterations of alternating projections. We use a projection batch size of $S =16$.

\subsection{Generative model}
We train a VAE with approximately $100,000$ parameters on MNIST and FMNIST. While keeping the encoder fixed we sample various decoders from the loss-projected posterior. We do $5$ iterations of alternating projections and use $20$ Monte Carlo Samples for predictions. Prior precision is chosen in the usual way.

\end{document}


%

%

\onecolumn
\aistatstitle{Appendix}

\section{Proofs}

\subsection{Proof of positive variance Lemma}
\label{sec:proof_lemma_positive_variance}
In this section, we prove \Cref{lm:variance_nonzero_nontrainset}. We restate it below for convenience.
\PositiveVarianceLemma*
\begin{proof}
It easier to show the counternominal i.e.\@ to show that if the variance is 0, then the square matrix has rank at most $NO$. 
Let's first define a more compact notation $\J_D = \J_{\btheta_{\textsc{map}}}$ and $\J_T=\J_{\btheta_{\textsc{map}}}(\x_{test})$ for the two Jacobians. Let $D=rank(\J_T)$ and $T=rank(\J_T$). It holds that $D\leq NO$ and $T\leq O$ since rank is always upper bounded by the number of rows. Consider their singular values decompositions
\begin{equation}
    J_D = \sum_{i=1}^{D} \sigma_i v_i V_i^\top
    \qquad
    J_T = \sum_{j=1}^{T} \gamma_j w_j W_j^\top
\end{equation}
where $\sigma_i,\gamma_j>0$, $v_i\in\mathbb{R}^{D}$, $V_i\in\mathbb{R}^{P}$, $w_j\in\mathbb{R}^{T}$, $ W_j\in\mathbb{R}^{P}$, $v_i^\top v_{i'}=V_i^\top V_{i'}=\delta_{i i'}$ and $w_j^\top W_{j'}=W_j^\top W_{j'}=\delta_{j j'}$ for any $i,i'=1,\ldots,D$ and $j,j'=1,\ldots,T$. Moreover let $V_{D+1},\ldots,V_P\in\mathbb{R}^P$ be a orthonormal completion of $V_1,\ldots,V_{D}$ as a basis.
\begin{align}
    \textnormal{\textsc{Var}}_{\btheta \sim q} f_{\textnormal{lin}}^{\btheta_{\MAP}}(\btheta, \x_{test})
    & =
    \textnormal{\textsc{Tr}}
    \left(
        \sum_{j,j'=1}^{T}
        \gamma_j w_j W_j^\top
        \overbrace{
        \left(
            \sum_{i=D+1}^P
            V_i V_i^\top
        \right)}^{\mathbb{I}-\mathcal{P}(\textsc{ggn})}
        \gamma_{j'} W_{j'} w_{j'}^\top
    \right) \\
    & =
    \sum_{i=D+1}^P
    \sum_{j=1}^{T}
    \gamma_j^2 W_j^\top
    V_i V_i^\top
    W_{j} \\
    & =
    \sum_{i=D+1}^P
    \sum_{j,=1}^{T}
    \gamma_j^2
    (W_j^\top V_i)^2
\end{align}
and a sum of positive elements being 0 implies that every element is 0, thus
\begin{equation}
    \textnormal{\textsc{Var}}_{\btheta \sim q} f_{\textnormal{lin}}^{\btheta_{\MAP}}(\btheta, \x_{test}) = 0
    \quad\Longrightarrow\quad
    W_j \perp V_i 
    \quad \forall j=1,\ldots T
    \quad \forall i=D+1,\ldots,P
\end{equation}
and since $V_1,\ldots,V_P$ is a basis, this implies that there exists some coefficients $\beta^{(j)}_k$ such that $W_j = \sum_{k=1}^{D} \beta^{(j)}_k V_k$. Consequently
\begin{equation}
\label{eq:blblbl_some_random_label}
    W_j W_j^\top
    =
    \sum_{k,k'=1}^{D} \beta^{(j)}_k V_k \beta^{(j)}_{k'} V_{k'}^\top
    =
    \sum_{k=1}^{D} (\beta^{(j)}_k)^2 V_k V_{k}^\top
    \quad \forall j=1,\ldots O
\end{equation}
Now consider the matrix
\begin{align}
    \begin{pmatrix}
        \J_D \\ \J_T
    \end{pmatrix}^\top
    \begin{pmatrix}
        \J_D \\ \J_T
    \end{pmatrix} 
    & =
    \J_D^\top\J_D
    +
    \J_T^\top\J_T \\
    & =
    \sum_{i=1}^D \sigma_i V_i V_i^\top
    +
    \sum_{j=1}^T \gamma_j W_j W_j^\top \qquad\quad (\cref{eq:blblbl_some_random_label})\\
    & =
    \sum_{i=1}^D \sigma_i V_i V_i^\top
    +
    \sum_{j=1}^T \gamma_j \sum_{k=1}^{D} (\beta^{(j)}_k)^2 V_k V_{k}^\top \\
    & =
    \sum_{i=1}^D \sigma_i V_i V_i^\top
    +
    \sum_{k=1}^{D} \left( \sum_{j=1}^T \gamma_j  (\beta^{(j)}_k)^2 \right) V_k V_{k}^\top  \\
    & =
    \sum_{i=1}^D 
    \left( 
        \sigma_i 
        +
        \sum_{j=1}^T \gamma_j  (\beta^{(j)}_i)^2
    \right)    
    V_i V_i^\top
\end{align}
which has $rank$ equal to $D$. Then finally
\begin{equation}
    rank
    \left(
    \begin{pmatrix}
        \J_D \\ \J_T
    \end{pmatrix}
    \begin{pmatrix}
        \J_D \\ \J_T
    \end{pmatrix} ^\top
    \right)
    =
    rank
    \left(
    \begin{pmatrix}
        \J_D \\ \J_T
    \end{pmatrix}^\top
    \begin{pmatrix}
        \J_D \\ \J_T
    \end{pmatrix} 
    \right)
    =
    D \leq NO
\end{equation}
which concludes the proof.
\end{proof}

To prepare a supplementary pdf file, we ask the authors to use \texttt{aistats2025.sty} as a style file and to follow the same formatting instructions as in the main paper.
The only difference is that the supplementary material must be in a \emph{single-column} format.
You can use \texttt{supplement.tex} in our starter pack as a starting point, or append the supplementary content to the main paper and split the final PDF into two separate files.

Note that reviewers are under no obligation to examine your supplementary material.

\section{Calibration of Loss Kernel Projection}
\label{appdx: calib}

The supplementary materials may contain detailed proofs of the results that are missing in the main paper.

\subsection{Proof of Lemma 3}

\textit{In this section, we present the detailed proof of Lemma 3 and then [ ... ]}

\section{Implementation details and experimental setup}
\label{appdx: expt_details}

If you have additional experimental results, you may include them in the supplementary materials.

\subsection{The Effect of Regularization Parameter}

\textit{Our algorithm depends on the regularization parameter $\lambda$. Figure 1 below illustrates the effect of this parameter on the performance of our algorithm. As we can see, [ ... ]}

\vfill